\setlist[enumerate]{leftmargin=.5in}
\setlist[itemize]{leftmargin=.5in}
\def\etal{\textit{et al. }}
\crefname{hypothesis}{Hypothesis}{Hypotheses}
\title{Quasi-Conformal Convolution : A Learnable Convolution for Deep Learning on Simply Connected Open Surfaces
\thanks{Submitted to the editors DATE.
}}
\author{Han Zhang \thanks{Department of Mathematics, City University of Hong Kong, Hong Kong, China. (hzhang863-c@my.cityu.edu.hk)
}
\and Tsz Lok Ip \thanks{Department of Mathematics, Chinese University of Hong Kong, Hong Kong, China;(enochitl@link.cuhk.edu.hk)
}
\and Lok Ming Lui \thanks{Department of Mathematics, Chinese University of Hong Kong, Hong Kong, China;(lmlui@math.cuhk.edu.hk)
}}
\begin{document}
\maketitle
\begin{abstract}
Deep learning on non-Euclidean domains is important for analyzing complex geometric data that lacks common coordinate systems and familiar Euclidean properties. A central challenge in this field is to define convolution on domains, which inherently possess irregular and non-Euclidean structures.
In this work, we introduce Quasi-conformal Convolution (QCC), a novel framework for defining convolution on simply-connected open surfaces using quasi-conformal theories. Each QCC operator is linked to a specific quasi-conformal mapping, enabling the adjustment of the convolution operation through manipulation of this mapping. By utilizing trainable estimator modules that produce quasi-conformal mappings, QCC facilitates adaptive and learnable convolution operators that can be dynamically adjusted according to the underlying data structured on the surfaces. QCC unifies a broad range of spatially defined convolutions, facilitating the learning of tailored convolution operators on each underlying surface optimized for specific tasks. Building on this foundation, we develop the Quasi-Conformal Convolutional Neural Network (QCCNN) to address a variety of tasks related to geometric data.
We validate the efficacy of QCCNN through the classification of images defined on curvilinear simply-connected open Riemann surfaces, demonstrating superior performance in this context. Additionally, we explore its potential in medical applications, including craniofacial analysis using 3D facial data and lesion segmentation on 3D human faces, achieving enhanced accuracy and reliability.
\end{abstract}
\begin{keywords}
  Quasi-Conformal Geometry, Deformable Convolution, Geometric Learning, Manifold Learning
\end{keywords}

\begin{AMS}
  53Z50, 68T45, 68U05, 65D18  
\end{AMS}
\section{Introduction}
Deep learning methodologies have increasingly been applied to non-Euclidean domains, encompassing irregular structures such as graphs and manifolds. While traditional deep learning operates on grid-like data, such as images and sequences, many real-world problems, ranging from the analysis of social networks to the study of 3D shapes, require direct learning from data defined on non-Euclidean geometric structures. The ability to process and extract meaningful representations of data defined on these complex domains makes these approaches essential for advancing machine learning applications in various fields, including computer vision, medical imaging, and computer-aided design.

The non-Euclidean nature of such data presents several challenges, primarily due to the absence of familiar Euclidean properties such as common coordinate systems, vector space structures, and shift-invariance. As a result, fundamental operations like convolution, which are central to deep learning in Euclidean domains, are not well-defined in these non-Euclidean domains. Previous works have attempted to define convolution on manifolds using various techniques, including spectral methods and spatially defined patch-based constructions. Spectral methods operate in the frequency domain; while elegant, these approaches are computationally expensive, sensitive to the choice of basis, and lack spatial locality. In contrast, patch-based methods seek to define convolution in the spatial domain by transferring local patches onto tangent spaces or employing local coordinate systems. However, these methods often require manually defined metrics. The primary goal of our paper is to explore methods for adapting convolution to non-Euclidean domains, specifically to facilitate deep learning on Riemann surfaces embedded in 3D.

In this work, we introduce Quasi-conformal Convolution (QCC), a novel framework for defining convolution on {simply-connected open} surfaces based on quasi-conformal theories. Each QCC operator is associated with a specific quasi-conformal mapping, allowing the convolution operation to be adjusted through the manipulation of this mapping. Through a trainable module that generates data-responsive quasi-conformal mappings, QCC allows for the adjustment of convolution operators to align with the specific structures of the surfaces. QCC encompasses a broad range of spatially defined convolutions, enabling the learning of customized convolution operators optimized for specific tasks on each underlying surface. Building on this foundation, we develop the Quasi-Conformal Convolutional Neural Network (QCCNN) to handle tasks involving information defined on the Riemann surfaces. QCCNN learns the optimal convolution operator for each underlying Riemann surface based on the specific task, allowing for more accurate results when processing data defined on complex geometric structures. The idea of learning the optimal convolution operator is particularly essential in complex surfaces, where the functions on the surfaces are considered highly distorted compared to planar images.


To demonstrate the efficacy of the proposed QCC framework, we applied our approach to several tasks on {simply-connected open} surfaces. First, we tested our method on the classification of images on curvilinear Riemann surfaces, achieving outstanding results in accurately classifying these images. Next, we applied our framework to craniofacial analysis using 3D human face data, which exhibited promising performance. Additionally, we conducted lesion segmentation on 3D human facial images, and our framework significantly outperformed existing methods, providing more accurate and reliable segmentation of facial lesions. Finally, we performed self-ablation studies to assess the impact of various parameter choices on the performance of the proposed models, further validating the robustness and adaptability of our approach.

To summarize, the main contributions of this work are as follows:
\begin{itemize}
    \item We provide a systematic and comprehensive definition of manifold convolution and parametrized manifold convolution, along with a theoretical investigation of their relationships.
    \item {We propose Quasi-Conformal Convolution (QCC), a novel framework for defining convolution on simply-connected open surfaces. The QCC framework enables the Riemannian metric to be defined by a quasi-conformal mapping, providing a theoretical foundation for developing adaptive convolution on such surfaces.}
    \item {Building on Quasi-Conformal Convolution, we develop the QCC layer for deep learning on simply-connected open surfaces, enabling learnable convolution specifically tailored to the geometric structure of the input surface and the requirements of the given task.} 
    \item We develop QCC Neural Networks that incorporate the designed QCC layer, making the model versatile for tasks such as classification and segmentation. Experiments demonstrate its advantages over related approaches.
\end{itemize}

\section{Related Works}
\subsection{Geometric Learning}
{The main objective of this work is to explore a general and principled approach for defining convolution operations on manifold domains. Unlike conventional Euclidean convolution \cite{lecun1995convolutional}, where data lies on a regular grid, manifold data resides on curved, often non-Euclidean spaces such as surfaces or meshes. Learning over such non-Euclidean domains is commonly referred to as geometric learning \cite{bronstein2017geometric}.}

In the field of geometric modelling, Bronstein \etal introduced manifold convolution with geodesic patch operators, demonstrating its success in various applications~\cite{ masci2015geodesic}. Similarly, Boscaini \etal utilized an anisotropic heat kernel to define the convolution window, further contributing to the field~\cite{boscaini2016learning}. Other convolution definitions have also succeeded in registration tasks~\cite{bouritsas2019neural, gong2019spiralnet++}. Additionally, the MeshCNN framework by Hanocka \etal is noteworthy, as it redefined convolution using edges rather than vertices, offering a natural and straightforward approach to the concept~\cite{hanocka2019meshcnn}. Schonsheck \etal propose \cite{schonsheck2022parallel} Parallel Transport Convolution to enhance the translation invariance and allow the construction of compactly supported filters in manifold neural networks.

\subsection{Deformable Convolution}

{The goal of this work is to develop a method for constructing adaptive convolution on manifold data.} 
Deformable convolution has been proposed to address the limitations of traditional convolution operations in Euclidean image domains when applied to Convolutional Neural Networks (CNNs). Jeon et al. \cite{jeon2017active} proposed Active Convolution (AC), incorporating a trainable attention mechanism into the convolution process. Related approaches are Spatial Transformer Network (STN)
and its variants \cite{jaderberg2015spatial,zhang2024learning,zhang2025deformation}, which introduce a learnable transformation module that can warp the input feature map
based on a set of learnable parameters. Subsequently, Dai et al. proposed Deformable Convolution (DCN) \cite{dai2017deformable}, which employs learnable offsets in the convolutional kernel to enable spatial adaptation. While DCN is a significant advancement, it struggles with handling large deformations and achieving occlusion invariance. To address these limitations, variants such as Deformable Convolution v2 (DCNv2) \cite{zhu2019deformable} and Deformable RoI Pooling (DRoIPool) \cite{dai2017deformable} were introduced. However, Luo et al. \cite{luo2016understanding} demonstrated that not all pixels contribute equally to the final DCN output. {These methods adjust how convolution is performed on data in Euclidean domains. In this work, we focus on developing a framework for learnable convolution on manifold data defined on simply-connected open Riemann surfaces, specifically tailored to the input data, geometric structure of the underlying surface and the requirements of the given task.}

\subsection{Computational Quasi-Conformal Mapping}
{To define deformable convolution on surfaces, it is essential to control the deformation. Computational quasi-conformal mapping offers a powerful tool for managing this, with successful applications in image science~\cite{lam2014landmark} and surface processing~\cite{levy2002least,gu2004genus}}. Benefitting from the Beltrami representation, the mapping between two different domains can preserve good geometric properties like bijectivity and smoothness, through controlling the Beltrami coefficients with such a representation of mappings. Driven by the motivation to preserve different geometric information, ways of parameterization methods are proposed~\cite{gu2003global}. Such convenient representations are also popular and succeed in the computational fabrication community~\cite{Soliman:2018:OCS,Crane:2013:RFC,panetta2019x}. With the capability to handle large deformations, the quasi-conformal method also succeeds in registration~\cite{lam2014landmark,choi2015fast}, restoration~\cite{zhang2025deformation}, and segmentation with topology- and convexity prior~\cite{zhang2025qis,zhang2024learning}. In \cite{zhang2022nondeterministic,zhang2022new}, quasi-conformality is used for deformation analysis with uncertainties to study medical images for disease analysis.

\section{Mathematical Background}
\subsection{Quasi-Conformal Geometry}

\begin{definition}[Quasi-conformal map]
A quasi-conformal map is a map $f: \mathbb{C} \rightarrow \mathbb{C}$ that satisfies the Beltrami equation
\begin{equation}
\frac{\partial f}{\partial \bar{z}}=\mu(z) \frac{\partial f}{\partial z}
\label{eq:beleq}
\end{equation}
for some complex-valued function named as Beltrami coefficient $\mu$ satisfying $\|\mu\|_{\infty}<1$ and $\frac{\partial f}{\partial z}$ is non-vanishing almost everywhere. The complex partial derivatives are given by
\begin{equation}
\frac{\partial f}{\partial z}:=\frac{1}{2}\left(\frac{\partial f}{\partial x}-i \frac{\partial f}{\partial y}\right) 
\quad \text{ and } \quad 
\frac{\partial f}{\partial \bar{z}}:=\frac{1}{2}\left(\frac{\partial f}{\partial x}+i \frac{\partial f}{\partial y}\right).
\end{equation}

\begin{figure}
    \centering
    \includegraphics[width=0.5\textwidth]{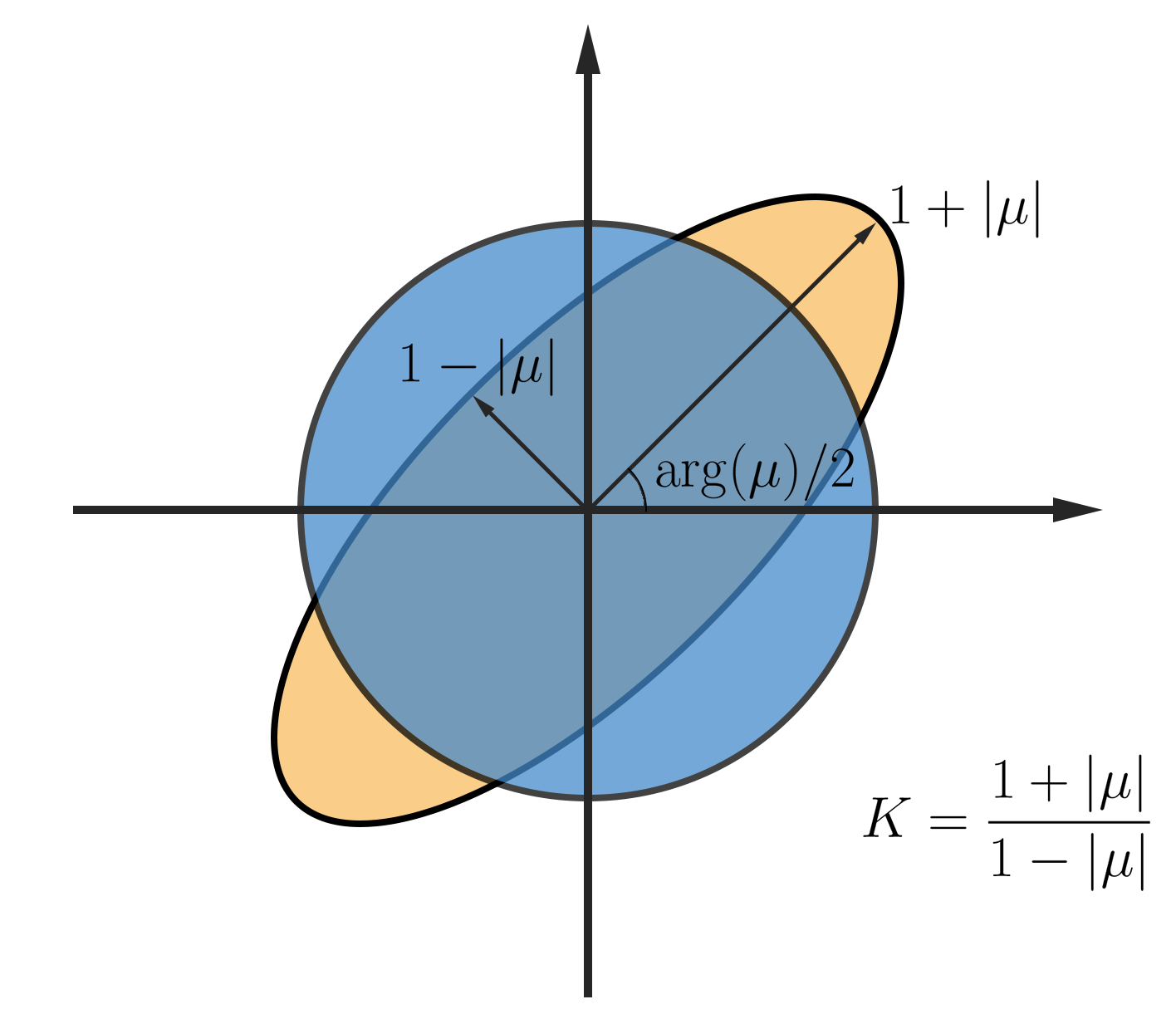}
    \caption{Illustration of how the Beltrami coefficient measures the conformality distortion of a quasi-conformal map}
    \label{fig:qcmap}
\end{figure}

\end{definition}
$\mu$ is the Beltrami representation, which is also called the Beltrami coefficient, of the quasi-conformal map $f$. It's worthy to mention that $\mu$ is a measure of non-conformality. Particularly, for a point $p$, the associated quasi-conformal map $f$ is conformal around a small neighbourhood of $p$ if $\mu(p)=0$. In this case, Equation \ref{eq:beleq} becomes the Cauchy-Riemann equation. This can also illustrate that conformality analysis of a quasi-conformal map $f$ can be simplified into the analysis of its associated Beltrami coefficient $\mu$. Infinitesimally, such a map $f$ can be rewritten as follows in a local neighbourhood around a point $p$:
\begin{equation}
\begin{aligned}
f(z) &=f(p)+f_{z}(p) z+f_{\bar{z}}(p) \bar{z} \\
&=f(p)+f_{z}(p)(z+\mu(p) \bar{z}).
\end{aligned}
\end{equation}
This further enhances our discussion before that $f$ is conformal when $\mu(p) = 0$. To explain the equation above, $f(p)$ is a translation, while $f_z(p)$ is a dilation. Since both of them are conformal, all the non-conformality of $f$ is brought by $D(z)=z+\mu(p) \bar{z}$. Hence, the Beltrami coefficient $\mu$ actually encodes the conformality of $f$. Analyzing quasi-conformal $f$ is equivalent to that for its associated Beltrami coefficient $\mu$. To be detail, the angle of maximal magnification is $\arg (\mu(p)) / 2$ with magnifying factor $1 + |\mu(p)|$; the maximal shrinking is the orthogonal angle $(\arg (\mu(p))-\pi) / 2$ with shrinking factor $1 - |\mu(p)|$. 

The maximal quasi-conformal dilation of $f$ is given by
\begin{equation}
K=\frac{1+\|\mu\|_{\infty}}{1-\|\mu\|_{\infty}}.
\end{equation}
Figure \ref{fig:qcmap} illustrates the geometry of a quasi-conformal map.

Another important relationship between a map and its Beltrami coefficients is the diffeomorphism property. By a norm constraint on $\mu$, the bijectivity of $f$ can be preserved which is explained by the following theory.

\begin{theorem}
If $f: \mathbb{C} \rightarrow \mathbb{C}$ is a $C^{1}$ map. Define 
\begin{equation}
\mu=\frac{\partial f}{\partial \bar{z}} / \frac{\partial f}{\partial z}.
\end{equation}
If $\mu$ satisfies $\left\|\mu_{f}\right\|_{\infty}<1$, then $f$ is bijective.
\end{theorem}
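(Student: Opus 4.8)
The plan is to obtain bijectivity from two ingredients: the hypothesis $\|\mu\|_\infty<1$ forces the real Jacobian of $f$ to be strictly positive everywhere, so $f$ is an orientation-preserving local diffeomorphism; and a local diffeomorphism of the plane that is additionally proper (as is implicit here) is automatically a global one. First I would convert the complex-analytic data into the real Jacobian: writing $f=u+iv$ and using the definitions of $\partial_z$ and $\partial_{\bar z}$, a short computation gives
\begin{equation}
J_f \;=\; u_x v_y - u_y v_x \;=\; \left|\frac{\partial f}{\partial z}\right|^2 - \left|\frac{\partial f}{\partial \bar z}\right|^2 .
\end{equation}
Since $\dfrac{\partial f}{\partial \bar z}=\mu\,\dfrac{\partial f}{\partial z}$ by the definition of $\mu$, this equals $\left|\frac{\partial f}{\partial z}\right|^2\bigl(1-|\mu|^2\bigr)$. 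The quotient defining $\mu$ only makes sense where $\frac{\partial f}{\partial z}\neq 0$, and $1-|\mu|^2\ge 1-\|\mu\|_\infty^2>0$; hence $J_f>0$ on all of $\mathbb{C}$, and the inverse function theorem shows that $f$ is an orientation-preserving local diffeomorphism.

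The second step is to pass from ``local'' to ``global.'' Positivity of $J_f$ alone does not give injectivity --- the conformal map $z\mapsto e^z$ is a counterexample --- so this step must invoke the properness that the statement implicitly assumes, namely that $f$ is proper (preimages of compact sets are compact), equivalently $|f(z)|\to\infty$ as $|z|\to\infty$. A proper local homeomorphism onto a connected, locally compact, simply connected Hausdorff space is a covering map; since $\mathbb{C}$ is simply connected, any such covering has exactly one sheet, so $f$ is a homeomorphism and in particular bijective. If instead $f$ comes with a boundary normalization (for instance, $f$ is a self-map of a Jordan domain restricting to a homeomorphism of the boundary), the same conclusion follows from a topological-degree argument: $J_f>0$ makes every local degree $+1$, and the boundary data pin the total degree to $1$.

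I expect the main obstacle to be exactly this local-to-global passage, because the statement as written suppresses the boundary or properness hypothesis that the conclusion genuinely requires; the real work is in pinning down the precise assumption in force and then running the covering-space (or degree) argument above. A cleaner but heavier alternative would be to appeal to the measurable Riemann mapping theorem: it produces a quasi-conformal homeomorphism $g:\mathbb{C}\to\mathbb{C}$ with Beltrami coefficient $\mu$, so that $h:=f\circ g^{-1}$ has Beltrami coefficient $0$ and is therefore holomorphic; the normalization then forces $h$ to be a M\"obius transformation, whence $f=h\circ g$ is a composition of homeomorphisms and hence bijective. I would present the elementary Jacobian-plus-topology version in the main text and mention the measurable Riemann mapping route only as a remark.
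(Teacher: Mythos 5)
The paper states this theorem as background and gives no proof, so there is nothing of the authors' to compare your argument against; I can only assess it on its own terms. Your opening computation is the standard and correct first step: $J_f=u_xv_y-u_yv_x=\left|\frac{\partial f}{\partial z}\right|^2-\left|\frac{\partial f}{\partial \bar z}\right|^2=\left|\frac{\partial f}{\partial z}\right|^2\left(1-|\mu|^2\right)>0$, so $f$ is an orientation-preserving local diffeomorphism wherever $\frac{\partial f}{\partial z}\neq 0$. More importantly, you are right to flag that this is where any honest proof must stop unless a further hypothesis is supplied: the statement as printed is literally false, since $f(z)=e^z$ is $C^1$ with $\mu\equiv 0$ (so $\|\mu\|_\infty=0<1$) yet is neither injective nor surjective. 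The true result that is presumably intended requires $f$ to be proper, or to be the normalized solution of the Beltrami equation as in the Measurable Riemann Mapping Theorem quoted just below it. Under either corrected hypothesis both of your completions are valid: a proper local homeomorphism of $\mathbb{C}$ onto $\mathbb{C}$ is a covering map of a simply connected space and hence a homeomorphism; alternatively, composing with the inverse of the normalized quasi-conformal homeomorphism with the same Beltrami coefficient yields a proper, locally injective holomorphic map of $\mathbb{C}$, which must be an affine polynomial, so $f$ is a composition of homeomorphisms. (Note the second route needs properness just as much as the first, since otherwise the holomorphic factor could again be $e^z$; you implicitly acknowledge this by invoking the normalization.) In short, your proposal is sound and more careful than the source: the defect it exposes lies in the theorem statement, not in your argument.
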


\begin{theorem}[Measurable Riemann mapping theorem \cite{gardiner2000quasiconformal}]
\label{them:RiemannMapping}
Suppose $\mu: \mathbb{C}\rightarrow\mathbb{C}$ is Lebesgue measurable satisfying $\|\mu\|_{\infty}<1$, then there exists a quasi-conformal mapping $f:\mathbb{C}\rightarrow \mathbb{C}$ in the Sobolev space $W^{1,2}$ that satisfies the Beltrami equation in the distribution sense. Furthermore, assuming that the mapping is stationary at $0, 1$ and $\infty$, then the associated quasi-conformal mapping $f$ is uniquely determined.
\end{theorem}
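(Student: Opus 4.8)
The plan is to establish existence first for compactly supported $\mu$ by converting the Beltrami equation into a linear integral equation solvable by a Neumann series, then to bootstrap to arbitrary measurable $\mu$ by an inversion argument, and finally to obtain uniqueness under the $0,1,\infty$ normalization from the composition law for Beltrami coefficients together with Weyl's lemma. For the first step, assume $\mu$ is supported in the unit disk and set $k:=\|\mu\|_\infty<1$. Using the Cauchy transform $P$ and the Beurling transform $T=\partial_z P$ --- which satisfy $\partial_{\bar z}(P\phi)=\phi$ and $\partial_z(P\phi)=T\phi$ in the distributional sense, with $T$ a Calder\'on--Zygmund operator that is an $L^2$-isometry --- one looks for a solution of the form $f(z)=z+(P\phi)(z)$; then \eqref{eq:beleq} is equivalent to the linear equation $(I-\mu T)\phi=\mu$ for $\phi$ in a suitable $L^p(\mathbb{C})$.

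Since $\|T\|_{L^2\to L^2}=1$ and $p\mapsto\|T\|_{L^p\to L^p}$ is continuous (by Riesz--Thorin interpolation and duality), I would fix $p>2$ close enough to $2$ that $k\,\|T\|_{L^p\to L^p}<1$. Then $I-\mu T$ is invertible on $L^p(\mathbb{C})$ and $\phi=\sum_{n\ge 0}(\mu T)^n\mu$ converges in $L^p$, so that $f=z+P\phi$ belongs to $W^{1,p}_{\mathrm{loc}}\subset W^{1,2}_{\mathrm{loc}}$, has $f_{\bar z}=\phi$ and $f_z=1+T\phi$, and satisfies the Beltrami equation almost everywhere.

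To see that this $f$ is in fact a quasiconformal homeomorphism, I would approximate $\mu$ by smooth, compactly supported $\mu_n\to\mu$ a.e.\ with $\|\mu_n\|_\infty\le k$; classical elliptic theory yields $C^1$ solutions $f_n$, which are bijective by the preceding theorem since $\|\mu_{f_n}\|_\infty\le k<1$, and the uniform dilatation bound $K=(1+k)/(1-k)$ gives uniform H\"older estimates and hence equicontinuity, so a locally uniformly convergent subsequence produces a $K$-quasiconformal homeomorphism that agrees with $f$ by uniqueness of the $L^p$-solution. For general measurable $\mu$ with $\|\mu\|_\infty<1$, I would split $\mu=\mu\chi_{\mathbb{D}}+\mu\chi_{\mathbb{C}\setminus\mathbb{D}}$, transport the second (non-compactly-supported) piece to a compactly supported Beltrami coefficient via $z\mapsto 1/z$, solve each piece as above, and compose the two solutions using the transformation rule for Beltrami coefficients under composition; post-composing with the unique M\"obius map sending the images of $0,1,\infty$ back to $0,1,\infty$ gives the normalized solution.

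For uniqueness, suppose $f$ and $g$ both solve \eqref{eq:beleq} for the same $\mu$ and both fix $0,1,\infty$. By the composition formula for Beltrami coefficients, $h:=f\circ g^{-1}$ is quasiconformal with Beltrami coefficient identically $0$; Weyl's lemma then forces $h$ to be conformal, hence (being an injective entire self-map of $\mathbb{C}$ fixing $\infty$) an affine map $z\mapsto az+b$, and fixing $0$ and $1$ makes $h$ the identity, so $f=g$. I expect the main obstacle to be the third step --- bridging the purely analytic $W^{1,p}$ solution coming from the Neumann series and the geometric assertion that $f$ is a genuine homeomorphism with nonvanishing $f_z$, which is where the quasiconformal distortion estimates and the smooth-approximation argument carry the real weight; the continuity of $\|T\|_{L^p\to L^p}$ at $p=2$ is the other delicate analytic ingredient.
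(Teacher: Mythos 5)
The paper states this result as classical background and gives no proof, simply citing Gardiner's book; your outline reproduces the standard Ahlfors--Bers argument via the Cauchy and Beurling transforms, the Neumann series in $L^p$ for $p>2$ with $k\,\|T\|_{L^p\to L^p}<1$, smooth approximation plus normal families for the homeomorphism property, and the composition formula with Weyl's lemma for uniqueness --- which is exactly the proof the cited reference contains, and it is correct in its essentials. The one place your sketch is loose is the reduction from general $\mu$ to compactly supported coefficients: one does not simply solve for $\mu\chi_{\mathbb{D}}$ and $\mu\chi_{\mathbb{C}\setminus\mathbb{D}}$ separately and compose; after solving for the inner map (with the non-compactly-supported piece transported by $z\mapsto 1/z$), the Beltrami coefficient of the outer factor must be recomputed from the composition rule so that the composite has coefficient $\mu$ --- you name the right tool, but the coefficient of the second solve is not $\mu\chi_{\mathbb{D}}$ itself.
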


\begin{figure}
    \centering
    \includegraphics[width=0.8\textwidth]{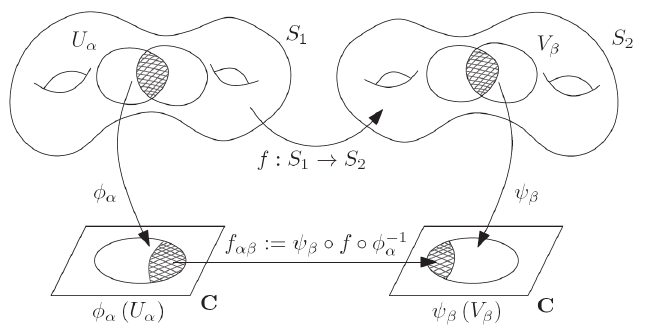}
    \caption{Illustration of quasi-conformal mapping between Riemann surfaces.}
    \label{fig:bcdifferential}
\end{figure}

The Beltrami coefficient of a composition of quasi-conformal maps is related to the Beltrami coefficients of the original maps. Suppose $f: \Omega \rightarrow f(\Omega)$ and $g: f(\Omega) \rightarrow \mathbb{C}$ are two quasi-conformal maps with Beltrami coefficients $\mu_f$ and $\mu_g$ correspondingly. The Beltrami coefficient of the composition map $g \circ f$ is given by
$$
\mu_{g \circ f}=\frac{\mu_f+\frac{\overline{f_z}}{f_z}\left(\mu_g \circ f\right)}{1+\frac{\overline{f_z}}{f_z} \overline{\mu_f}\left(\mu_g \circ f\right)} .
$$

Quasi-conformal maps can also be defined between two Riemann surfaces. In this case, the Beltrami differential is used. A Beltrami differential $\mu(z) \frac{d\overline{z}}{d z}$ on a Riemann surface $S$ is an assignment to each chart ($U_\alpha, \phi_\alpha$) of an $L_{\infty}$ complex-valued function $\mu_\alpha$, defined on local parameter $z_\alpha$ such that
$$
\mu_\alpha \frac{d \overline{z_\alpha}}{d z_\alpha}=\mu_\beta \frac{d \overline{z_\beta}}{d z_\beta},
$$
on the domain which is also covered by another chart $\left(U_\beta, \phi_\beta\right)$. Here, $\frac{d z_\beta}{d z_\alpha}=\frac{d}{d z_\alpha} \phi_{\alpha \beta}$ and $\phi_{\alpha \beta}=\phi_\beta \circ \phi_\alpha$. An orientation preserving diffeomorphism $f: M \rightarrow N$ is called quasi-conformal associated with $\mu(z) \frac{d z}{d z}$ if for any chart ($U_\alpha, \phi_\alpha$) on $M$ and any chart $\left(U_\beta, \psi_\beta\right)$ on $N$, the mapping $f_{\alpha \beta}:=\psi_\beta \circ f \circ f_\alpha^{-1}$ is quasi-conformal associated with $\mu_\alpha \frac{d \overline{z}}{d z_\alpha}$. Readers are referred to \cite{gardiner2000quasiconformal,lehto1973quasiconformal} for more details about quasi-conformal theories.
\section{Adaptive Convolution on Riemannian Manifolds}
\label{sec:convolution}

Convolution is a fundamental mathematical operator in mathematics, physics, and engineering. It combines two functions to illustrate how the characteristics of one function are modified by the other. The necessity of defining convolution stems from its applications in various contexts. In this section, we will define convolution on Riemann surfaces, starting with an investigation of convolution on general manifolds and progressing to the definition of convolution on simply connected surfaces through Quasi-conformal Convolution (QCC), which is applicable to many real-world situations.

\subsection{Convolution on Riemannian \textit{n}-manifold}

\subsubsection{Convolution on Manifold}
Before providing a definition of convolution on manifolds, we first examine the standard convolution operation in Euclidean space, as outlined in the following definition.

\begin{definition}[Convolution]
For two functions $h, k : \mathbb{R}^n \to \mathbb{R}$, the convolution of $h$ and $k$ is defined as:
\begin{equation}
(h \ast k)(x) = \int_{\mathbb{R}^n} h(y) k(x - y) \, dy,
\end{equation}
where:
\begin{itemize}
    \item $x \in \mathbb{R}^n$ is the point at which the convolution is evaluated,
    \item $y \in \mathbb{R}^n$ is the integration variable,
    \item $k(x - y)$ translates $k$ to align it with $h$.
\end{itemize}
\end{definition}

Defining convolution on manifolds or surfaces is more complex than in Euclidean space due to the absence of a global linear structure on manifolds. In the Euclidean case, convolution involves translating the kernel function $k$ using the displacement vector $x-y$. However, this approach does not directly apply to manifolds. A suitable notion of displacement must be established before performing convolution on manifolds.

\begin{definition}[Displacement function and displacement vector]
    Let $\mathcal{M}$ be a Riemannian $n$-manifold, and $U \subseteq \mathcal{M}$ be a subset of $\mathcal{M}$. A function $d: U\times U \to \mathbb{R}^n$ is a displacement function on $U$ if it satisfies the following properties:
    \begin{enumerate}
        \item For all $p, q \in U$, $d(p,q) = 0$ if and only if $p=q$.
        \item For all $p, q, r \in U$, $d(p,r) = d(p,q) + d(q,r)$.
    \end{enumerate}
    Then, the vector $d(p, q)$ is referred to as the displacement vector from $p$ to $q$.
    
    Moreover, if the functions $d(\cdot, q_0)$ and $d(p_0, \cdot)$ are orientation-preserving homeomorphisms from $U$ to subsets of $\mathbb{R}^n$ depending on $p_0$ and $q_0$ for all fixed $p_0, q_0 \in U$, then the displacement function $d$ is said to be regular.
\end{definition}


The above displacement function is introduced to replace the standard expression $ x-y $ in $ \mathbb{R}^n $. This displacement function closely mimics the fundamental property of translation symmetry \cite{schonsheck2022parallel}. According to our definition, the displacement between any two distinct points on a manifold is always nonzero, and the sum of vectors along a path aligns precisely with the vector directly connecting the path’s endpoints. By adhering to these properties, the kernel can be effectively translated across different points on the manifold.

With the displacement function $d$ on $\mathcal{M}$, we are now ready to give a general definition of manifold convolution.

\begin{definition}[Manifold convolution]\label{def:mani_conv}
Let $\mathcal{M}$ be a Riemannian $n$-manifold with a metric $g$, and let $h: \mathcal{M} \to \mathbb{R}$ be a manifold function with a kernel function $k: \mathbb{R}^{n}\to \mathbb{R}$. The convolution of $h$ and $k$ on $\mathcal{M}$ is defined as:

\begin{equation}\label{eq:mani_conv}
(h \ast_{\mathcal{M},d,g} k)(p) = \int_{\mathcal{M}} h(q) k(d(p, q)) \, dq,
\end{equation}

where:
\begin{itemize}
    \item $p, q \in \mathcal{M}$,
    \item $d: \mathcal{M}\times\mathcal{M} \to \mathbb{R}^n $ is a global displacement function on $\mathcal{M}$.
\end{itemize}
For simplicity, we denote $*_{\mathcal{M}, d,g}$ as $*_{d,g}$.
Moreover, $*_{d,g}$ is said to be regular if the displacement function $d$ is regular.

\end{definition}



Note that the manifold convolution above is not commutative as the manifold function $h$ is defined on $\mathcal{M}$ while kernel function $k$ is defined on $\mathbb{R}^n$. This lack of commutativity does not hinder the definition of the convolution operation for deep learning tasks on Riemann surfaces. 


Before we proceed to the next section, we shall emphasize that the convolution operator $*_{d,g}$ does depend on the Riemannian metric $g$ of $\mathcal{M}$ due to the standard definition of integration on manifolds used in Equation \ref{eq:mani_conv}. More details are discussed in the following remark.
\begin{remark}\label{rm:volume_form}
    Let $\mathcal{M}$ be a Riemannian $n$-manifold and let $h: \mathcal{M} \to \mathbb{R}$ be a manifold function. 
    Suppose $\mathcal{M}$ is covered by a collection of coordinate charts $\{(U_\alpha, \phi_\alpha)\}$, and let $\{\psi_\alpha\}$ be a partition of unity subordinate to the cover $\{U_\alpha\}$. In local coordinates, the {\it volume form $d\nu$ induced by $g$} is given by:
\begin{equation}
d\nu =  \sqrt{\det(g(x))} \, dx^1 \wedge dx^2 \wedge \cdots dx^n,
\end{equation}
\noindent where $g(x)=(g_{ij}(x))_{1\leq i,j\leq n}$. The manifold integral of $h$ is then defined as
    \begin{equation}
    \int_\mathcal{M} h \, d\nu = \sum_\alpha \int_{U_\alpha} \psi_\alpha(x) h(\phi_\alpha^{-1}(x)) \sqrt{\det(g(x))} \, dx.
    \end{equation}
  \end{remark}


\subsubsection{Convolution on Manifold via Parameterization}

The concept of the convolution operation on a manifold is not well-established, primarily due to the curvature of the manifold. In the Euclidean domain, the plain convolution operator involves shifting the kernel function. This shifting is straightforward in Euclidean space, where the geometry is flat and the displacement from one point to another is well-defined. In this subsection, we introduce the idea of defining convolution on a Riemann surface on its 2D parametric flat domain.

\begin{definition}[Parametrized manifold convolution]
\label{def:parameterizedconv}
    Let $\mathcal{M}$ be a Riemannian $n$-manifold, and let $h: \mathcal{M} \to \mathbb{R}$ be a manifold function with a kernel function $k: \mathbb{R}^{n}\to \mathbb{R}$. Suppose there exists a bijective parametrization $\phi:\Omega \to \mathcal{M}$, where $\Omega \subset \mathbb{R}^n$. The parametrized manifold convolution of $h$ and $k$ with respect to $\phi$ is defined as:
    \begin{equation}
        (h *_\phi k)(p) = \int_\Omega h(\phi(y))k(\phi^{-1}(p)-y)dy.
    \end{equation}
    
    
    \label{them:parameterconv}
\end{definition}

The definition above simplifies the computation of manifold convolution and provides a more intuitive viewpoint by performing convolution on an Euclidean domain. To explore the relationship between parameterized manifold convolution and manifold convolution, the following lemma is essential to demonstrate that a displacement function on a manifold gives rise to a surface parameterization, enabling us to define the parameterized manifold convolution.



\begin{lemma}\label{disp2phi}
     Let $\mathcal{M}$ be a Riemannian $n$-manifold. If there exists a displacement function $d:\mathcal{M} \times \mathcal{M} \to \mathbb{R}^n$, then there exists a bijective parametrization $\phi: \Omega \to \mathcal{M}$ such that
     \begin{equation}
         d(p,q) = \phi^{-1}(p)-\phi^{-1}(q)
     \end{equation}
     for all $p, q \in \mathcal{M}$, where $\Omega \subset \mathbb{R}^n$.

    Conversely, if there exists a bijective parametrization $\phi: \Omega \to \mathcal{M}$, where $\Omega \subset \mathbb{R}^n$, then a function $d:\mathcal{M} \times \mathcal{M} \to \mathbb{R}^n$ defined by 
    \begin{equation}
         d(p,q) = \phi^{-1}(p)-\phi^{-1}(q)
    \end{equation}
    for all $p, q \in \mathcal{M}$, is a displacement function on $\mathcal{M}$.
\end{lemma}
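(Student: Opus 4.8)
The plan is to prove the two implications separately: the forward direction by building $\phi$ from $d$ using a fixed base point, and the converse by directly verifying the two displacement-function axioms.

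For the forward direction, I would first record two elementary consequences of the additivity property (2). Taking $p=q=r$ gives $d(p,p)=0$, and then taking $r=p$ gives $0=d(p,q)+d(q,p)$, so $d$ is antisymmetric. Next I fix an arbitrary base point $p_0 \in \mathcal{M}$ and define $\psi : \mathcal{M} \to \mathbb{R}^n$ by $\psi(p) := d(p, p_0)$. Combining additivity with antisymmetry gives $\psi(p) - \psi(q) = d(p,p_0) + d(p_0, q) = d(p,q)$ for all $p,q \in \mathcal{M}$. Property (1) immediately yields injectivity of $\psi$: if $\psi(p) = \psi(q)$ then $d(p,q) = \psi(p)-\psi(q) = 0$, hence $p = q$. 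Setting $\Omega := \psi(\mathcal{M}) \subseteq \mathbb{R}^n$ makes $\psi : \mathcal{M} \to \Omega$ a bijection, and $\phi := \psi^{-1} : \Omega \to \mathcal{M}$ is then the required bijective parametrization, since $\phi^{-1}(p) - \phi^{-1}(q) = \psi(p) - \psi(q) = d(p,q)$.

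For the converse, given a bijective $\phi : \Omega \to \mathcal{M}$, I set $d(p,q) := \phi^{-1}(p) - \phi^{-1}(q)$ and check the axioms. Property (1) follows from injectivity of $\phi^{-1}$, since $d(p,q) = 0 \iff \phi^{-1}(p) = \phi^{-1}(q) \iff p = q$. Property (2) is the telescoping identity $\phi^{-1}(p) - \phi^{-1}(r) = \bigl(\phi^{-1}(p) - \phi^{-1}(q)\bigr) + \bigl(\phi^{-1}(q) - \phi^{-1}(r)\bigr)$, which holds for every $p,q,r \in \mathcal{M}$.

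Since both directions reduce to elementary algebra, there is no serious analytic obstacle. The one place where I would be careful is the precise notion of \emph{parametrization}: the map $\phi = \psi^{-1}$ constructed above is a priori only a set-theoretic bijection, and $\Omega = \psi(\mathcal{M})$ is merely some subset of $\mathbb{R}^n$. If the intended use requires $\phi$ to be a homeomorphism and $\Omega$ to be an open subset of $\mathbb{R}^n$ (so that the integral defining the parametrized manifold convolution is over a genuine domain), I would additionally invoke regularity of $d$: there $d(\cdot, p_0) = \psi$ is an orientation-preserving homeomorphism onto its image, so $\phi$ inherits continuity with continuous inverse, and invariance of domain forces $\Omega$ to be open. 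I expect this regularity bookkeeping, rather than the construction itself, to be the main point needing care.
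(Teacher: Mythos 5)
Your proof is correct and follows essentially the same route as the paper: fix a base point $p_0$, define $\tilde\phi(p)=d(p,p_0)$, deduce bijectivity onto $\Omega=d(\mathcal{M},p_0)$ from axiom (1), and verify the converse by direct telescoping. Your explicit derivation of $d(p,p)=0$ and antisymmetry from axiom (2) (which the paper uses implicitly when writing $d(p,q)=d(p,p_0)-d(q,p_0)$), and your remark that regularity of $d$ is what upgrades $\phi$ from a set-theoretic bijection to a homeomorphism onto an open $\Omega$, are both sound refinements rather than departures.
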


\begin{proof}
    $(\Rightarrow)$ Suppose there exists a displacement function $d:\mathcal{M} \times \mathcal{M} \to \mathbb{R}$. Pick any $p_0 \in \mathcal{M}$ and let $\Omega = \{d(p, p_0): p \in \mathcal{M}\} \subset \mathbb{R}^n$. Define $\tilde{\phi}: \mathcal{M} \to \Omega$ by $\tilde{\phi}(p) = d(p, p_0)$. Then
    \begin{equation}
        d(p, q) = d(p, p_0) - d(q, p_0) = \tilde{\phi}(p)-\tilde{\phi}(q).
    \end{equation}
    Note that this equation shows that $\tilde{\phi}$ is injective as $d(p,q)=0$ implies $p=q$, and the surjectivity of $\tilde{\phi}$ is guaranteed by the definition of $\Omega$, $\tilde{\phi}$ is therefore bijective. 
    Thus $\phi = \tilde{\phi}^{-1}$ is the desired bijective parametrization.
    
    $(\Leftarrow)$ Check that for all $p, q, r \in \mathcal{M}$,
    $$d(p, q) = 0 \iff \phi^{-1}(p) = \phi^{-1}(q) \iff p=q$$
    and
    $$d(p, r) = \phi^{-1}(p) - \phi^{-1}(r) = (\phi^{-1}(p) - \phi^{-1}(q)) + (\phi^{-1}(q) - \phi^{-1}(r)) = d(p, q) + d(q, r).$$
    Therefore, $d$ is a displacement function for $\mathcal{M}$.
\end{proof}




\begin{theorem}\label{thm:equiv_conv}
    Let $\mathcal{M}$ be a Riemannian $n$-manifold and $d$ be a displacement function on $\mathcal{M}$. Then there exists a bijective parametrization $\phi: \Omega \to \mathcal{M}$, where $\Omega \subset \mathbb{R}^n$, along with a metric $g$ of $\mathcal{M}$, such that $*_{d, g} = *_\phi$. Conversely, for any bijective parametrization $\phi: \Omega \to \mathcal{M}$, there exists a displacement function $d$ on $\mathcal{M}$ and a metric $g$ of $\mathcal{M}$ such that $*_{d, g} = *_\phi$.
\end{theorem}

\begin{proof}
    ($\Rightarrow$) Suppose $*_\mathcal{M} = *_{d,g}$ for some displacement function $d$ on $\mathcal{M}$. By Lemma \ref{disp2phi}, there exists a bijective parametrization $\phi: \Omega \to \mathcal{M}$ such that $$d(p,q) = \phi^{-1}(p)-\phi^{-1}(q)$$ for all $p, q \in \mathcal{M}$, where $\Omega \subset \mathbb{R}^n$. Consider $\phi^{-1}$ as the coordinate chart function and define the Riemannian metric $g = (\phi^{-1})^*g_{\mathbb{R}^n}$ as a pullback metric of $\mathcal{M}$, where $g_{\mathbb{R}^n}$ is the standard Euclidean metric. The volume form $d\nu$ can be obtained by Remark \ref{rm:volume_form}. 
    
    Since the distance $dg(p,q) = |\phi^{-1}(p)-\phi^{-1}(q)|$ for all $p,q \in \mathcal{M}$, $\phi^{-1}$ is an isometric mapping with respect to the metric $g$ and $\det(D\phi^{-1}) = 1$, therefore $dy = |\det(D\phi^{-1})| \, dq = dq$ for $\phi(y) = q$. For any manifold function $h:\mathcal{M} \to \mathbb{R}$, kernel function $k: \mathbb{R}^n \to \mathbb{R}$ and $p \in \mathcal{M}$, we now have 
    \begin{equation}
        \begin{aligned}
            (h *_\phi k)(p) &= \int_\Omega h(\phi(y))k(\phi^{-1}(p)-y)dy \\
            &= \int_\mathcal{M} h(q)k(\phi^{-1}(p)-\phi^{-1}(q))dq \\
            &= \int_\mathcal{M} h(q)k(d(p,q))dq \\
            &= (h *_{d,g} k)(p).
        \end{aligned}
        \label{eq:para2mani}
    \end{equation}
    Therefore $*_{d, g} = *_\phi$.

    ($\Leftarrow$) By Lemma \ref{disp2phi}, the function $d(p,q) = \phi^{-1}(p)-\phi^{-1}(q)$ is a displacement function. Therefore, the result follows similarly.
\end{proof}




Theorem \ref{thm:equiv_conv} establishes that the space of manifold convolutions, equipped with a displacement function $d$ and an associated metric $g$, is equivalent to the space of parameterized manifold convolutions. Hence, the set of all parameterized manifold convolutions encompasses a substantial portion of manifold convolutions, suggesting that many spatially defined manifold convolutions can be effectively represented within this parameterized framework. To conclude this section, we will investigate the regularity properties of both types of convolutions.


\begin{corollary}
    Let $\mathcal{M}$ be a Riemannian $n$-manifold equipped with a displacement function $d$ and a metric $g$, is parametrized by a function $\phi$ such that $*_{d,g} = *_\phi$. Then the following statements are equivalent:
    \begin{enumerate}
        \item $d$ is regular.
        \item $\phi$ is an orientation-preserving homeomorphism.
        \item $*_{d,g} = *_\phi$ is regular.
    \end{enumerate}
    \label{them:manifoldparametrized}
\end{corollary}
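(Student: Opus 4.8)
The plan is to reduce the three-way equivalence to two observations: that (1)$\Leftrightarrow$(3) is immediate from the definitions, and that (1)$\Leftrightarrow$(2) follows once the displacement function $d$ and the parametrization $\phi$ are tied together by the relation supplied by Lemma~\ref{disp2phi} and Theorem~\ref{equiv_conv}. For (1)$\Leftrightarrow$(3) there is essentially nothing to do: by Definition~\ref{mani_conv} the manifold convolution $*_{d,g}$ is \emph{declared} regular precisely when $d$ is regular, and since the standing hypothesis is $*_{d,g}=*_\phi$, the statement ``$*_{d,g}=*_\phi$ is regular'' is by definition the statement ``$d$ is regular.'' So the substantive content is the equivalence (1)$\Leftrightarrow$(2).

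The bridge between (1) and (2) is the following structural fact, which I would establish first. The hypothesis $*_{d,g}=*_\phi$ is exactly the configuration built in Theorem~\ref{equiv_conv}; requiring the two convolutions to agree for every kernel $k$ forces $\phi^{-1}(p)-\phi^{-1}(q)=d(p,q)$ for all $p,q\in\mathcal{M}$. Fixing a basepoint $p_0\in\mathcal{M}$ and using property~2 of a displacement function, the quantity $\phi^{-1}(p)-d(p,p_0)$ is independent of $p$, so $\phi^{-1}=d(\cdot,p_0)+c$ for some constant $c\in\mathbb{R}^n$; that is, $\phi^{-1}$ is the ``first-slot'' map $d(\cdot,p_0)$ post-composed with a translation of $\mathbb{R}^n$.

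Granting this, both directions of (1)$\Leftrightarrow$(2) become routine. If $d$ is regular, then $d(\cdot,p_0)$ is an orientation-preserving homeomorphism onto a subset of $\mathbb{R}^n$; post-composing with the translation $x\mapsto x+c$, an orientation-preserving homeomorphism of $\mathbb{R}^n$, shows $\phi^{-1}$, hence its inverse $\phi$, is an orientation-preserving homeomorphism. Conversely, if $\phi$ (hence $\phi^{-1}$) is an orientation-preserving homeomorphism, then for any fixed $p_0,q_0$ the maps $d(\cdot,q_0)=\phi^{-1}(\cdot)-\phi^{-1}(q_0)$ and $d(p_0,\cdot)=\phi^{-1}(p_0)-\phi^{-1}(\cdot)$ are $\phi^{-1}$ post-composed with the affine maps $x\mapsto x-\phi^{-1}(q_0)$ and $x\mapsto \phi^{-1}(p_0)-x$; both are homeomorphisms of $\mathbb{R}^n$, and they are orientation-preserving in the regime of interest (for $n=2$, where $x\mapsto -x$ is a rotation, and more generally for even $n$), so $d(\cdot,q_0)$ and $d(p_0,\cdot)$ are orientation-preserving homeomorphisms onto subsets of $\mathbb{R}^n$, i.e.\ $d$ is regular.

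The step I expect to require the most care is the structural fact itself: one must argue that $*_{d,g}=*_\phi$, read as an identity valid for every kernel $k$ rather than for a single fixed $k$, genuinely pins down $d(p,q)=\phi^{-1}(p)-\phi^{-1}(q)$ and is not merely a measure-theoretic coincidence. This is why the hypothesis ``$\mathcal{M}$ is parametrized by $\phi$ such that $*_{d,g}=*_\phi$'' must be interpreted in the sense of the construction in Theorem~\ref{equiv_conv}, in which the volume form is the Euclidean pullback $(\phi^{-1})^*g_{\mathbb{R}^n}$ and $\det D\phi^{-1}=1$. Once that identity is in hand, the rest is bookkeeping about how translations and (in the relevant dimension) the antipodal map interact with orientation.
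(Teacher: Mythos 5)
Your proof is correct and follows essentially the same route as the paper's: (1)$\Leftrightarrow$(3) is definitional, and (1)$\Leftrightarrow$(2) is reduced to the identity $d(p,q)=\phi^{-1}(p)-\phi^{-1}(q)$, from which $\phi^{-1}$ agrees with $d(\cdot,p_0)$ up to a translation. Your added care about the second slot $d(p_0,\cdot)$ picking up the antipodal map $x\mapsto -x$ (orientation-preserving only for even $n$, in particular $n=2$) is a legitimate subtlety that the paper's proof silently glosses over.
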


\begin{proof}
    Note that (1) and (3) are equivalent by definition, and (2) implies (1) as $*_{d,g} = *_\phi$ implies $d(p,q) = \phi^{-1}(p) - \phi^{-1}(q)$, which is regular if $\phi$ is an orientation-preserving homeomorphism.
    
    (1) $\Rightarrow$ (2): Suppose $d$ is regular, then using the construction of $\phi$ in the previous theorem, we immediately see $\phi$ is an orientation-preserving homeomorphism. Assume $*_{d,g} = *_{\phi_1}$ for some bijective parametrization $\phi_1: \Omega_1 \to \mathcal{M}$, where $\Omega_1 \subset \mathbb{R}^n$, then for all $p, q \in \mathcal{M}$,
    \begin{equation}
    \begin{aligned}
        &\phi^{-1}(p) - \phi^{-1}(q) = \phi_1^{-1}(p) - \phi_1^{-1}(q) \\
        \Rightarrow \quad &\phi^{-1}(p) - \phi_1^{-1}(p) = \phi^{-1}(q) - \phi_1^{-1}(q) \\
        \Rightarrow \quad &\phi^{-1} - \phi_1^{-1} \equiv c \qquad \text{for some constant vector } c \in \mathbb{R}^n.
    \end{aligned}
    \end{equation}
    Hence $\phi_1$ is also an orientation-preserving homeomorphism. 
\end{proof}

\subsection{Convolution on Riemann Surfaces}
In this work, we focus our problems on simply connected open surfaces embedded in $\mathbb{R}^3$, now we will move on to describe how we can define convolution on Riemann surfaces.

\subsubsection{Conformal Convolution}
A natural and useful approach to produce a parametrized manifold convolution is to employ a conformal parametrization of the manifold. A conformal parametrization $\phi$ is a map from a domain $\Omega \subset \mathbb{R}^2$ to the manifold $\mathcal{M} \subset \mathbb{R}^3$ that preserves angles. By mapping the curved surface to a flat Euclidean space, we can take advantage of the well-established theory of Euclidean convolutions. 

To define the convolution of a manifold function $h : \mathcal{M} \to \mathbb{R}$ on a $2$-manifold $\mathcal{M}$ and a kernel function $k : \mathbb{R}^2 \to \mathbb{R}$, we begin by pulling the function back to the parameter domain $\Omega$ using the conformal parametrization $\phi$. This transforms the problem of manifold convolution into a more manageable Euclidean convolution problem. Specifically, for each function $h$ defined on the surface and $k$ defined on $\mathbb{R}^2$, we define the pullback function $\tilde{h}$ on the flat domain $\Omega$ as follows:
\begin{equation}
\tilde{h}:=\phi^* h = h \circ \phi.
\end{equation}
The pullback functions $\tilde{h}$ are now defined on the Euclidean space $\Omega \subset \mathbb{R}^2$, where the convolution operation can be performed using the standard Euclidean formulation.

The convolution of $\tilde{h}$ and $k$ in the Euclidean domain $\Omega$ is then defined as:
\begin{equation}
(\tilde{h} \ast k)(x) = \int_{\Omega} \tilde{h}(y) k(x - y) \, dy,
\end{equation}
where the integral is taken over the domain $\Omega \subset \mathbb{R}^2$. This is the classical convolution in Euclidean space, which is computationally efficient and well-understood.

Once the convolution is computed in the Euclidean domain, the result must be mapped back to the manifold $\mathcal{M}$. This is done by applying the inverse of the conformal parametrization $\phi^{-1}$, which maps the Euclidean result back onto the manifold's coordinates. The convolution on the manifold is then given by:
\begin{equation}
(h \ast_\phi k)(p) = (\tilde{h} \ast k)(\phi^{-1}(p)),
\end{equation}
where $p \in \mathcal{M}$ is a point on the manifold, and $\phi^{-1}(p)$ returns the corresponding point in the parameter domain $\Omega$.

Then, the manifold convolution is reduced to a Euclidean convolution performed in the parametrized space $\Omega$, followed by a pullback to the manifold using the inverse of the conformal map. The final formal definition is then written as:
{
\begin{definition}[Conformal Convolution]
Let $\mathcal{M} \subset \mathbb{R}^3$ be a 2-manifold, and let $h: \mathcal{M} \to \mathbb{R}$ be a manifold function with a kernel function $k: \mathbb{R}^{n}\to \mathbb{R}$. Suppose there exists a conformal parametrization $\phi: \Omega \to \mathcal{M}$, where $\Omega \subset \mathbb{R}^2$. The conformal convolution of $h$ and $k$ with respect to $\phi$ is defined as:
\begin{equation}
    (h \ast_\phi k)(p) \coloneqq \int_{\Omega} h(\phi(y)) k(\phi^{-1}(p) - y) dy.
\end{equation}
\label{def:conformalconv}
\end{definition}
}
It is also important to note that, according to the Riemann Mapping Theorem (Theorem \ref{them:RiemannMapping}), the conformal parameterization is unique when three points are fixed. In practice, by mapping the surface to a disk or rectangle while fixing the boundary points, the parametrization becomes determined. 

{
Importantly, conformal convolution is a special case of the parametrized convolution defined in Definition~\ref{def:parameterizedconv}. According to Theorem~\ref{thm:equiv_conv}, it can also be expressed in the form of a manifold convolution, as summarized in the following remark:
\begin{remark}
Under the condition of Definition~\ref{def:conformalconv}, the conformal convolution of $h$ and $k$ can be written in the manifold convolution form given in Definition~\ref{def:mani_conv} as:
\begin{equation}   
    (h \ast_\phi k)(p) = (h \ast_{d,g} k)(p).
\end{equation}
which is followed from \eqref{eq:para2mani}, where:
\begin{itemize}
    \item $p,q \in \mathcal{M}$,
    \item $x=\phi^{-1}(p), y=\phi^{-1}(q) \in \Omega$,
    \item $d(p,q) = \phi^{-1}(p) - \phi^{-1}(q)$ is the displacement function,
    \item $g = (\phi^{-1})^* g_{\mathbb{R}^2}$ is the Riemannian metric of $\mathcal{M}$.
\end{itemize}    
Moreover, by introducing the pullback function $\tilde{h} = h \circ \phi$, the convolution can be further reduced to a standard 2D Euclidean convolution:
\begin{equation}
\begin{aligned}
    (h \ast_\phi k)(p) 
    &= \int_{\Omega} h(\phi(y)) k(\phi^{-1}(p) - y) \\
    &= \int_{\Omega} \tilde{h}(y) k(x - y) dy \\
    &= (\tilde{h} \ast k)(x).
\end{aligned}
\end{equation}
\end{remark}
}

Although conformal parametrization retains nice geometric properties once the surface is mapped to the 2D domain, there is no evidence that it is the best parametrization to define a parametrized convolution. As a special case of parametrized convolution, conformal convolution is too restrictive for advanced usage, such as when implemented into deep learning tasks. Therefore, a much more flexible convolution, namely Quasi-Conformal Convolution, will be introduced in the next subsection.

\subsubsection{Quasi-Conformal Convolution}

The proposed Quasi-Conformal Convolution (QCC) is to define convolution operation on manifolds using quasi-conformal mappings. Quasi-conformal theory offers a mathematically robust framework for studying deformations between surfaces while preserving local geometric structures. By leveraging this theory, QCC extends the convolution operation to non-Euclidean domains such as manifolds, enabling deep learning methods to process irregular and geometrically distorted data.

Here, similar to how we define parametrized convolution, we have the following definition for quasi-conformal convolution.




{
\begin{definition}[Quasi-conformal Convolution]
Let $\mathcal{M} \subset \mathbb{R}^3$ be a 2-manifold, and let $h: \mathcal{M} \to \mathbb{R}$ be a manifold function with a kernel function $k: \mathbb{R}^{n}\to \mathbb{R}$. Suppose there exists a conformal parametrization $\phi: \Omega \to \mathcal{M}$ and a quasi-conformal mapping $f: \Omega \to \Omega$, where $\Omega \subset \mathbb{R}^2$. The quasi-conformal convolution of $h$ and $k$ with respect to $\phi$ and $f$ is defined as:
\begin{equation}
    (h \ast_{\phi, f} k)(p)
    = \int_{\Omega}h(\phi(f^{-1}(y')))k(f\circ\phi^{-1}(p) - y') dy'.
\end{equation}
\label{def:qc_conv}
\end{definition}
}
{
Similar to conformal convolution, quasi-conformal convolution is a special case of the parametrized convolution defined in Definition~\ref{def:parameterizedconv} by having the parameterization as $\phi \circ f^{-1}$. Thus, according to Theorem~\ref{thm:equiv_conv}, we could have the following remark:
\begin{remark}
Under the condition of Definition~\ref{def:qc_conv}, the quasi-conformal convolution of $h$ and $k$ can be written in the manifold convolution form given in Definition~\ref{def:mani_conv} as:
\begin{equation}   
\begin{aligned}
    (h \ast_{\phi, f} k)(p)
    &= \int_{\Omega}h(\phi(f^{-1}(y')))k(f\circ\phi^{-1}(p) - y') dy'\\
    &= \int_{\mathcal{M}}h(q)k(f\circ\phi^{-1}(p) - f\circ\phi^{-1}(q)) dq \\
    &= (h \ast_{d,g }k)(p).
    \label{eq:qcc_rewrite}  
\end{aligned}
\end{equation}
where:
\begin{itemize}
    \item $p,q \in \mathcal{M}$,
    \item $x=\phi^{-1}(p), y=\phi^{-1}(q) \in \Omega$,
    \item $x'=f(x), y'=f(y) \in \Omega$,    
    \item $d(p,q) = f\circ\phi^{-1}(p) - f\circ\phi^{-1}(q)$ is the displacement function,
    \item $g = (f \circ \phi^{-1})^* g_{\mathbb{R}^2}$ is the Riemannian metric of $\mathcal{M}$.
\end{itemize}    
Besides, it can also be rewritten into a general 2D convolution by having the pullback function $\tilde{h} = h \circ \phi$, and the transformed pullback function $h^\#=\tilde{h}\circ f^{-1}$, which leads to
\begin{equation}
\begin{aligned}
    (h \ast_{\phi, f} k)(p)
    &= \int_{\mathcal{M}}h(q)k(f\circ\phi^{-1}(p) - f\circ\phi^{-1}(q)) dq\\
    &= \int_{\Omega} h\circ\phi(y) k(f(x) - f(y)) \, df(y) = \int_{\Omega} \tilde{h}(y)k(f(x) - f(y)) \, df(y)\\
    &= \int_{\Omega} \tilde{h}\circ f^{-1}(y')k(x' - y') \, dy' = \int_{\Omega} h^\#(y')k(x' - y') \, dy'\\
    &= (h^\# \ast k)(x').
    \label{eq:qcc2plain}
\end{aligned}
\end{equation}
\end{remark}

\begin{remark}\label{rm:any_para}
    Under the condition of Definition~\ref{def:qc_conv}, for any orientation preserving homeomorphism $\psi:\Omega \rightarrow\mathcal{M}$ and quasi-conformal mapping $\hat{f}:\Omega \to \Omega$. The convolution operator $\ast_{\psi, \hat{f}}$ is defined by
    \begin{equation}
        (h \ast_{\psi, \hat{f}} k)(p)
        = \int_{\mathcal{M}}h(\psi(\hat{f}^{-1}(y')))k(\hat{f}\circ\psi^{-1}(p) - y') dy'
    \end{equation}    
    By taking the quasi-conformal map $f = \hat{f}\circ\psi^{-1}\circ\phi$, we have
    \begin{equation}
    \begin{aligned}
        (h \ast_{\psi, \hat{f}} k)(p)
        &=\int_{\mathcal{M}}h(\psi\circ \hat{f}^{-1}(y'))k(\hat{f}\circ\psi^{-1}(p) - y') dy'\\
        &= \int_{\mathcal{M}}h(\psi\circ(f\circ\phi^{-1}\circ\psi)^{-1}(y'))k((f\circ\phi^{-1}\circ\psi)\circ\psi^{-1}(p) - y') dy'\\
        &=\int_{\mathcal{M}}h(\phi\circ f^{-1}(y'))k(f\circ\phi^{-1}(p) - y') dy'\\
        &=(h \ast_{\phi, f} k)(p).        
    \end{aligned}
    \end{equation}
    Therefore, $\ast_{\psi, \hat{f}} = \ast_{\phi, f}$, and hence $\ast_{\psi, \hat{f}}$ is a quasi-conformal convolution operator.
\end{remark}
}


\begin{theorem}
    Let $\mathcal{M} \subset \mathbb{R}^3$ be a 2-manifold and $*_\mathcal{M}$ be a parametrized manifold convolution. Then $*_\mathcal{M}$ is regular if and only if $*_\mathcal{M}$ is a quasi-conformal convolution.
\end{theorem}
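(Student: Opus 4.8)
The plan is to reduce the statement to a property of the parametrizing map, using Corollary~\ref{them:manifoldparametrized}, which says that a parametrized manifold convolution $*_\phi$ is regular exactly when $\phi$ is an orientation-preserving homeomorphism. The key bridge is that a quasi-conformal convolution is itself a parametrized manifold convolution: if $\phi_0:\Omega\to\mathcal{M}$ is a conformal parametrization and $f:\Omega\to\Omega$ is quasi-conformal, set $\psi:=\phi_0\circ f^{-1}$; then the chain of equalities defining the quasi-conformal convolution in Equation~\ref{eq:parametrizedQCC} reads $(h\ast_{\phi_0,f}k)(p)=(h^\#\ast k)(\psi^{-1}(p))$ with $h^\#=h\circ\psi$ and $\psi^{-1}=f\circ\phi_0^{-1}$, which is precisely $(h\ast_\psi k)(p)$ in the sense of Definition~\ref{them:parameterconv}. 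Hence $\ast_{\phi_0,f}=\ast_\psi$, and the question of which $\ast_\mathcal{M}$ are quasi-conformal convolutions becomes the question of which parametrizations $\psi$ factor as $\phi_0\circ f^{-1}$ with $\phi_0$ conformal and $f$ quasi-conformal.

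For the ($\Leftarrow$) direction, suppose $*_\mathcal{M}=*_{\phi_0,f}$ is a quasi-conformal convolution. By the bridge, $*_\mathcal{M}=*_\psi$ with $\psi=\phi_0\circ f^{-1}$. A conformal parametrization is an orientation-preserving homeomorphism onto $\mathcal{M}$, and a quasi-conformal self-map of $\Omega$ is an orientation-preserving homeomorphism, so the composite $\psi$ is one as well; Corollary~\ref{them:manifoldparametrized} then yields that $*_\mathcal{M}$ is regular.

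For the ($\Rightarrow$) direction, suppose $*_\mathcal{M}=*_\phi$ is regular, so $\phi:\Omega\to\mathcal{M}$ is an orientation-preserving homeomorphism and $\Omega$ is simply connected. By uniformization (cf.\ Theorem~\ref{them:RiemannMapping}) the surface $\mathcal{M}$ admits a conformal parametrization; composing it with a Riemann map of $\Omega$ onto the model domain, we may take this conformal parametrization to have domain exactly $\Omega$, say $\phi_0:\Omega\to\mathcal{M}$ (the parabolic case $\Omega\cong\mathbb{C}$ is handled analogously). Put $f:=\phi^{-1}\circ\phi_0:\Omega\to\Omega$; then $\phi_0\circ f^{-1}=\phi$, so by the bridge $*_{\phi_0,f}=*_\phi=*_\mathcal{M}$. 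It remains only to check that $f$ is quasi-conformal: since $\phi_0$ is conformal, the Beltrami coefficient of $f$ agrees with that of the chart $\phi^{-1}$ measured against the conformal chart $\phi_0^{-1}$, so $\|\mu_f\|_\infty<1$ is exactly the distortion control carried by the regularity of $\phi$.

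The step I expect to be the main obstacle is this last one: upgrading $f=\phi^{-1}\circ\phi_0$ from an orientation-preserving homeomorphism to a genuine quasi-conformal map, i.e.\ securing $\|\mu_f\|_\infty<1$ rather than merely pointwise dilatation below one. The natural resolution is to read "regular parametrized manifold convolution'' so that the underlying parametrization is itself quasi-conformal (equivalently, its Beltrami coefficient relative to a conformal chart has sup-norm strictly below one), which is the appropriate notion of a smooth coordinate chart in this quasi-conformal setting; then $f$ inherits quasi-conformality at once from the composition rule for Beltrami coefficients, being the composite of a quasi-conformal map with a conformal one. The remaining ingredients — existence of $\phi_0$ on the prescribed domain via uniformization, and the fact (already used in the proof of Corollary~\ref{them:manifoldparametrized}) that an additive constant in the parameter coordinate leaves the convolution unchanged, so that realizing $\phi$ up to a translation of $\Omega$ is enough — are routine and introduce no new difficulty.
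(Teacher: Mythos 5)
Your proposal is correct and follows essentially the same route as the paper: reduce regularity to the parametrizing map being an orientation-preserving homeomorphism via Corollary~\ref{them:manifoldparametrized}, identify a quasi-conformal convolution as the parametrized convolution with parametrization $\phi_0\circ f^{-1}$, and factor a regular parametrization through a conformal one. The paper's own proof is a one-line assertion that ``$\phi$ is quasi-conformal if and only if it is an orientation-preserving homeomorphism,'' so the obstacle you flag---that an orientation-preserving homeomorphism need not satisfy $\|\mu\|_\infty<1$---is present in the paper's argument as well and is not a defect introduced by your approach.
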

\begin{proof}
    The mapping $\phi: \Omega \to \mathcal{M}$ is quasi-conformal if and only if it is an orientation-preserving homeomorphism, which is equivalent to saying that the parametrized manifold convolution $*_\phi$ is regular.
\end{proof}

The theorem above show that we can generalize any regular parametrized manifold convolution on simply connected surfaces into quasi-conformal convolution, through which a substantial subset of manifold convolution could be represented. {In addition, Remark \ref{rm:any_para} indicates that we can relax the conformal property of the initial surface parametrization $\phi$, which is convenient in practice.} As different quasi-conformal parameterizations would yield distinct convolution operators on that surface, one can possibly find the best convolution operator for a specific task by optimizing the corresponding quasi-conformal parameterization. {Also, as described in the definition, the QCC enables the Riemannian metric on the manifold to be defined by a quasi-conformal mapping, which provides a theoretical foundation for developing adjustable convolution on such surfaces.

Moreover, from a neural network design perspective, these theoretical foundations enable the introduction of a novel learnable convolution mechanism through a trainable quasi-conformal mapping. In this framework, the convolution window is no longer fixed but instead deforms adaptively in response to both the underlying geometry and data. This geometric adaptivity offers significantly greater flexibility for geometric deep learning. Importantly, such geometric flexibility is unique, distinct from merely increasing the parameter count.} 

In the next section, we will develop a deep neural network framework to learn the optimal quasi-conformal parameterization associated with the best convolution operator for a given task. Quasi-conformal mappings preserve local geometric structures while allowing controlled deformations. This characteristic ensures that the convolution operation aligns with the intrinsic geometry of the manifold, enabling more robust and effective feature extraction when implementing quasi-conformal convolution into deep neural networks.
\section{Quasi-Conformal Convolutional Neural Network on Riemann Surfaces}

\begin{figure}
    \centering
    \includegraphics[width=\linewidth]{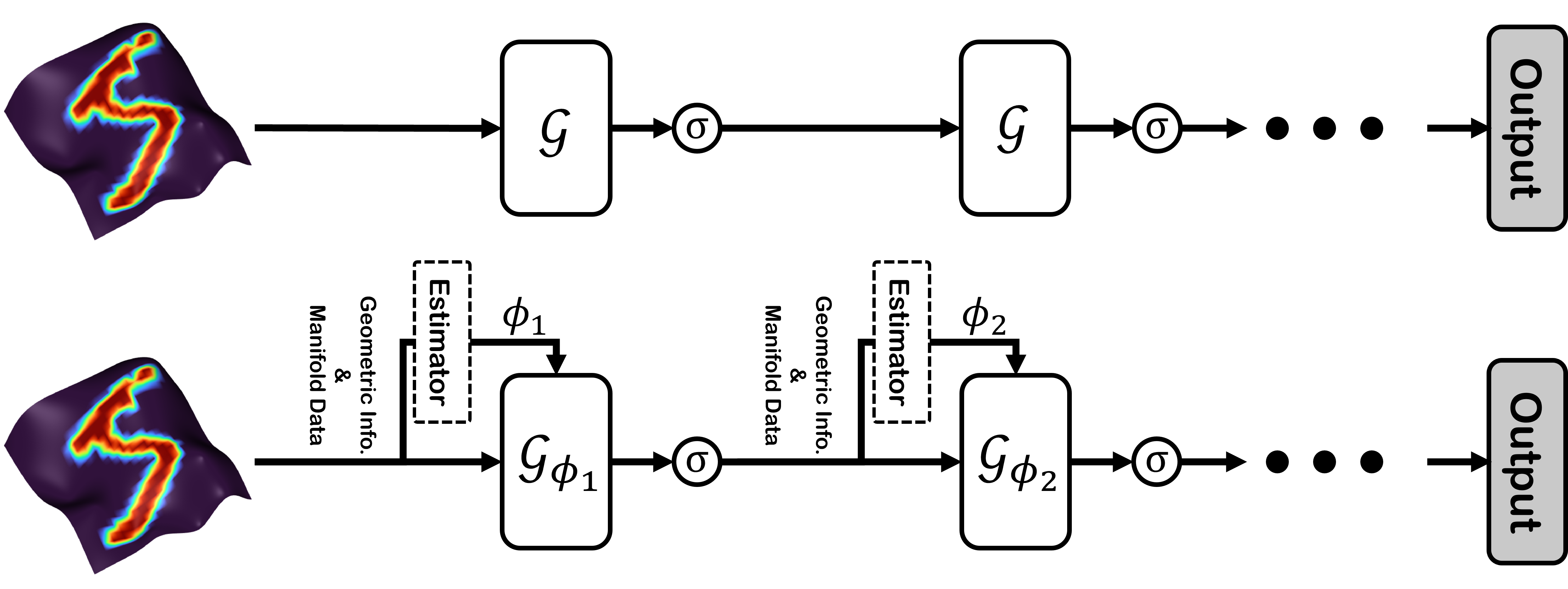}
    \caption{Illustration of a conventional convolutional neural network with predefined, untrainable convolution operations (top) compared to our proposed model (bottom) featuring learnable, data-driven convolution operations.}
    \label{fig:archi-compare}
\end{figure}

In this section, we aim to introduce the Quasi-Conformal Convolutional Neural Network (QCCNN) using quasi-conformal convolution as described in the previous section, which is capable of performing learning tasks on Riemann surfaces. We will present the architecture for a QC convolutional layer, detailing how it can be implemented in various settings.

The proposed QCCNN model incorporates learnable convolution operations that adapt dynamically to data through representing manifold convolution via quasi-conformal mappings. Unlike conventional models that rely on predefined and untrainable convolution operations, our architecture dynamically learns the convolution operation before entering each convolutional layer (Figure \ref{fig:archi-compare}). Specifically, at each layer $\mathcal{G}_i$, an auxiliary module, referred to as the Estimator, generates a mapping $\phi_i$ based on the feature map from the previous layer. This mapping, as described in Definition \ref{them:parameterconv}, serves as a parameterization that defines the learnable convolution operation for the current layer. The resulting feature map from $\mathcal{G}_i$ is then passed to subsequent layers, enabling a cascade of adaptive transformations.

\subsection{QC Convolutional Layer}

In this work, a key innovation lies in utilizing quasi-conformal mappings to learn an optimal parameterized manifold convolution. This section explores how QCC is integrated into deep learning methodologies and introduces the Quasi-conformal Convolutional Layer (QCC Layer), a pivotal component that extends standard convolution operations to 3D surfaces.

The QCC layer shares a similar purpose with conventional 2D convolutional layers: extracting high-dimensional feature maps from input data. However, unlike standard convolutional layers that operate on regular Euclidean grids, the QCC layer is designed to process manifold data $h$, where the convolution operations leverage a learnable, data-driven manifold convolution operation.

As shown in Figure \ref{fig:qcclayer}, the process begins by parameterizing the manifold data $ h $ through a parameterization function $ \phi$, which yields a pullback feature function $ \tilde{h} = h \circ \phi $, providing a common planar domain for subsequent computations. The pullback feature function $ \tilde{h} $ is then passed through the quasi-conformal mapping estimator (QCE), a specialized module that computes a quasi-conformal mapping $ f $. This mapping $ f $ adapts dynamically to the input data, reparametrizing $ \tilde{h} $ into $ h^\# $ as $ h^\# = \tilde{h} \circ f^{-1} $. At this stage, any conventional 2D convolution applied to $ h^\# $ is equivalent to performing a deformable convolution with $ \tilde{h} $, effectively bridging Euclidean and non-Euclidean operations.

\begin{figure}
    \centering
    \includegraphics[width=\textwidth]{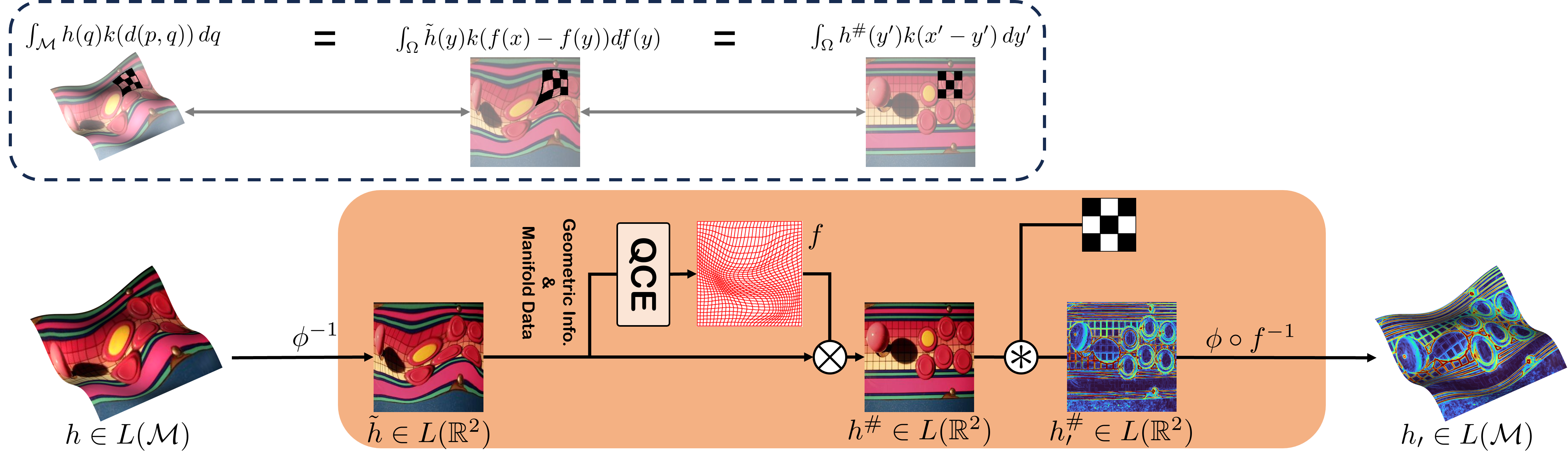}
    \caption{Illustration of the Quasi-conformal Convolutional layer: The process begins by parameterizing the manifold to establish a common planar domain for computations. A learnable, adaptive convolution is done through a data-driven quasi-conformal mapping. The dashed box highlights the equivalence between deformable convolution on the manifold, deformable convolution on the parameterized domain, and standard 2D convolution on the reparameterized domain via the quasi-conformal mapping.}
    \label{fig:qcclayer}
\end{figure}

Following this principle, the QCC layer allows the warped feature function $ h^\# $ to undergo convolution with a learnable kernel $ k $ in the planar domain. Mathematically, this operation represents a deformable convolution for $ \tilde{h} $, which can further be a deformable convolution with the original manifold data $ h $. Importantly, the kernel itself is deformable, with both its weights and shape dynamically adjusted based on the input data. This adaptability is achieved through the proposed QCE module, which enables the QCC layer to learn not only the optimal weights but also the geometric configuration of the kernels during training.

The output of this convolution operation, $ h^\#_\prime $, is then mapped back to the original manifold structure. This involves reversing the quasi-conformal deformation $ f $ and the parameterization function $ \phi $. This step ensures that the extracted features are mapped back onto the manifold, completing the process.

Importantly, the mapping $ f $ produced by the QCE, which we generally use a UNet indicated as in Figure \ref{fig:QCE}, should satisfy the property of being a quasi-conformal mapping, which requires its associated Beltrami coefficient to remain strictly less than 1. To enforce this condition, additional regularization terms are introduced:  
\begin{equation}\label{eq:bclosses}
\begin{aligned}
    \mathcal{L}_{\text{bc}} &= ||\mu(f)||_2, \\
    \mathcal{L}_{\text{lap}} &= ||\Delta f||_2,
\end{aligned}
\end{equation}
where $ \mu(f) $ represents the Beltrami coefficient of the mapping $ f $, computed via a Finite Difference Method implementation of Equation \eqref{eq:beleq}. 
To drive the Beltrami coefficient in our model to maintain a supremum norm strictly less than 1, we aim to control the size of this coefficient by minimizing a properly weighted $ \mathcal{L}_{\text{bc}} $. Empirical results demonstrate that our approach effectively reduces the Beltrami coefficient below this threshold, thereby encouraging the mapping $f$ to satisfy the quasi-conformal property. The term $ \Delta f $ denotes the Laplacian of the mapping $ f $, and the regularization $ \mathcal{L}_{\text{lap}} $ is included to promote smoothness in the mapping. Together, these regularizations ensure that $ f $ adheres to the quasi-conformal constraints while maintaining desirable geometric properties, such as continuity and smoothness.

The QCC layer's unique capability to integrate quasi-conformal mappings into the convolution process distinguishes it from traditional manifold learning approaches. By learning and dynamically adjusting both the kernel shape and kernel weightings, the QCC layer offers a robust framework for handling non-Euclidean domains. This adaptability leaves convolution operations dynamic and learnable, resolving a significant challenge in manifold convolution and enhancing the effectiveness of learning on manifold data.

\begin{figure}
    \centering
    \includegraphics[width=0.85\textwidth]{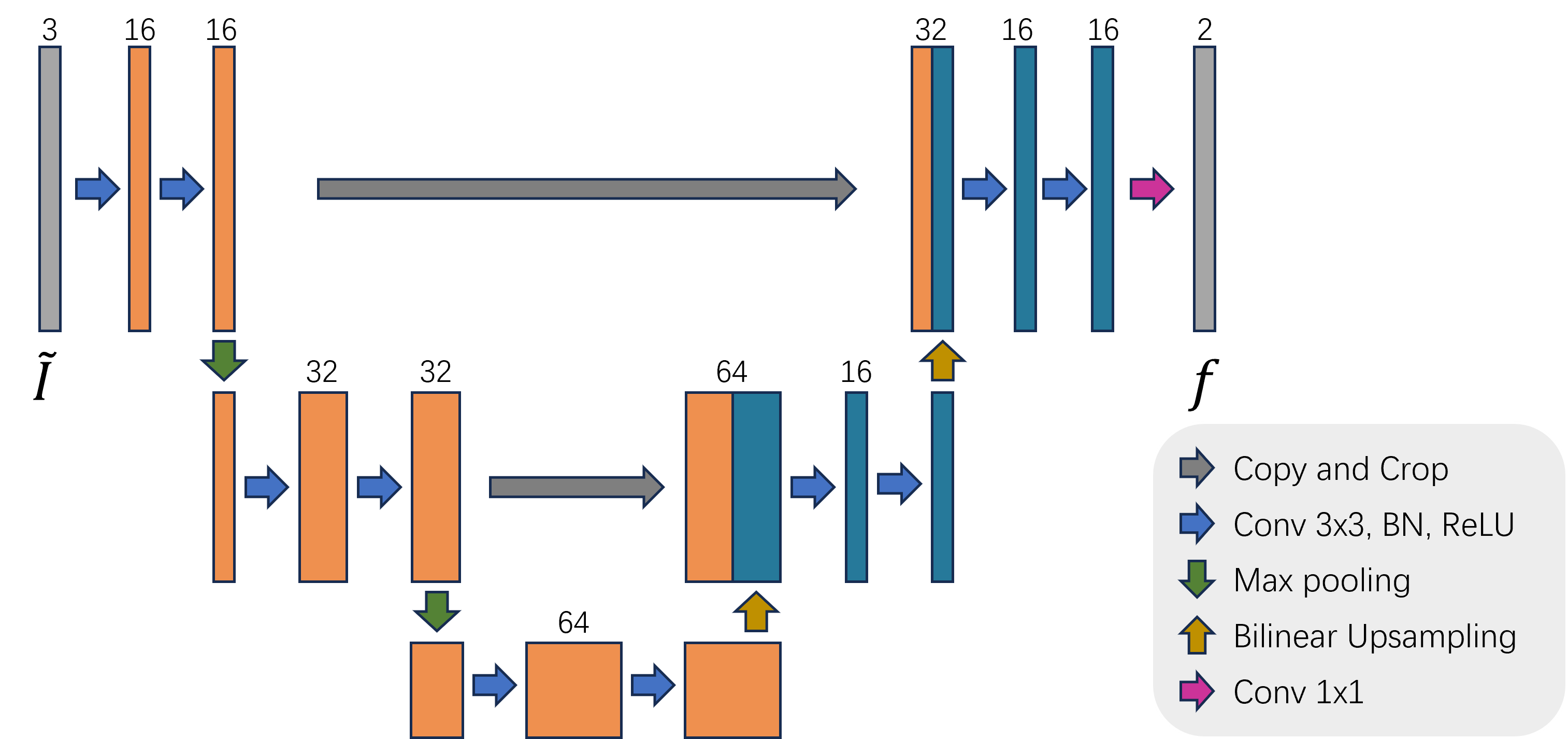}
    \caption{The architecture of the QC mapping estimator.}
    \label{fig:QCE}
\end{figure}

\subsection{QC Convolutional Neural Network}
\label{QCCNN}


Then, we delve into the construction of a Quasi-conformal Convolutional Neural Network (QCCNN), which integrates the quasi-conformal convolutional layer into a fully functional deep learning framework.

For simplicity in our discussion, we outline the design of a QCCNN where each layer has a single channel. Extending the architecture to multiple channels is straightforward. The network begins with an input manifold data $ h $, and subsequent feature maps $ h_i $ are computed iteratively using the QCC operation with kernel functions $k_i$ in each layer. Mathematically, the forward propagation of features through the network of $n$ layers can be expressed as follows:
\begin{equation}
\begin{aligned}
    h_1\, &= h,\\
    h_2\, &= \sigma(h_1 \ast_{\phi, f_1} k_1),\\
    h_3\, &= \sigma(h_2 \ast_{\phi, f_2} k_2),\\
    &\qquad \vdots\\    
    h_{out} &= \sigma(h_{n} \,\ast_{\phi, f_{n}} k_{n}),
\label{eq:formulationQCCNNmulti}
\end{aligned}
\end{equation}
where $ \sigma $ represents a non-linear activation function, $ k_i $ are the learnable convolution kernels, and $ f_i $ are the quasi-conformal mappings used to define the QCC at layer $ i $. The operator $ \ast_{\phi, f_i} $ denotes the QCC operation defined earlier.

A crucial aspect of the QCCNN is the role of the mapping $ f_i $, which determines the convolution operation on the manifold and governs the deformation of the convolution windows. While $ f_i $ can vary across layers, allowing each layer to learn a layer-specific mapping $ f_i $, a unified mapping $ f $, shared across all layers ($ f_i = f $ for every layer $ i $), is also a viable and meaningful choice. 
Since the first estimator captures most geometric features of the surfaces and outputs an optimal quasi-conformal mapping $f$ for the parametrized convolution, it is reasonable that such mapping $f$ is also effective for the latter layers. Moreover, using independent estimators to generate quasi-conformal mappings at each layer would significantly increase memory and computational costs for the entire network. Therefore, it is more efficient to estimate the quasi-conformal mapping at the beginning of the model and apply this mapping consistently across the QCC layers in the subsequent hidden layers to avoid unnecessary computations.

Considering that a consistent QCC operation across layers is often sufficient, we discuss using a uniform mapping $ f $ for each layer. This leads the QCCNN to the following formulation:
\begin{equation}
\begin{aligned}
    h_1\, &= h, \\
    h_2\, &= \sigma(h_1 \ast_{\phi, f} k_1), \\
    h_3\, &= \sigma(h_2 \ast_{\phi, f} k_2), \\
    &\qquad \vdots\\    
    h_{out} &= \sigma(h_n \,\ast_{\phi, f} k_n).
\label{eq:formulationQCCNNsingle}
\end{aligned}
\end{equation}

According to the formulation in Equation \eqref{eq:qcc2plain}, we can rewrite Equation \eqref{eq:formulationQCCNNsingle} above as
\begin{equation}
\begin{array}{rll}
    h_1\, &= h &= h_1^\# \circ f \circ \phi^{-1}, \\
    h_2\, &= \sigma(h_1 \ast_{\phi, f} k_1) &= \sigma(h_1^\# \ast k_1) \circ f \circ \phi^{-1}, \\
    h_3\, &= \sigma(h_2 \ast_{\phi, f} k_2) &= \sigma(h_2^\# \ast k_2) \circ f \circ \phi^{-1}, \\
    &\qquad \vdots \\
    h_{out} &= \sigma(h_n \ast_{\phi, f} k_n) &= \sigma(h_n^\# \ast k_n) \circ f \circ \phi^{-1}.
\end{array}
\end{equation}

From the observation on the above formulation, if one single quasi-conformal mapping is employed for each QCC layer, performing QC convolution between manifold feature function $ h_i $ and a sequence of kernels is equivalent to performing a sequence of kernels on the transformed pullback function $ h_i^\# $ in the parametrized domain. Thus, given the input data $ h $, performing 2D CNN on the transformed pullback function $ h^\# $ is equivalent to performing a QCCNN, which has the same architecture as the 2D CNN with each convolutional layer replaced by a QC convolutional layer under $ \phi $ and $ f $, on the manifold function $ h $.

In other words, given an input feature function $ h $ in a manifold and a QCE $ \mathcal{M}_{\text{QCE}} $, for any 2D CNN $ \mathcal{N} $, we could obtain a QCCNN $ \mathcal{N}_{f}(h) = \mathcal{N}(h^\#) $, where $ h^\# = h \circ \phi \circ f^{-1} $ and $ f = \mathcal{M}_{\text{QCE}}(h \circ \phi) $. The QCCNN $ \mathcal{N}_f $ should have the same architecture as the 2D CNN by replacing all plain 2D convolutions ($ \ast $) in $ \mathcal{N} $ with $ \ast_{\phi, f} $. This discussion demonstrates that to perform such a QCCNN, we can estimate one single QC mapping $ f $ to obtain $ h^\# $ and directly apply a plain 2D CNN on $ h^\# $, as shown in Figure \ref{fig:qccimplementation}. This is equivalent to directly applying a QCCNN on the manifold function $ h $ with a learned QC mapping $ f $. 

In the quasi-conformal convolution framework, there is no direct pooling operation on the manifold itself. However, pooling, traditionally used to reduce the spatial dimensions of feature maps and adjust the domain's scale for learning, can be effectively implemented using standard 2D pooling operations in the planar parameterization domain. This approach is particularly advantageous in QCC, as the parameterization mapping $f$ can represent any regular manifold convolutions, thereby accommodating deformable pooling operations in a consistent manner.

\begin{figure}
    \centering
    \includegraphics[width=0.9\textwidth]{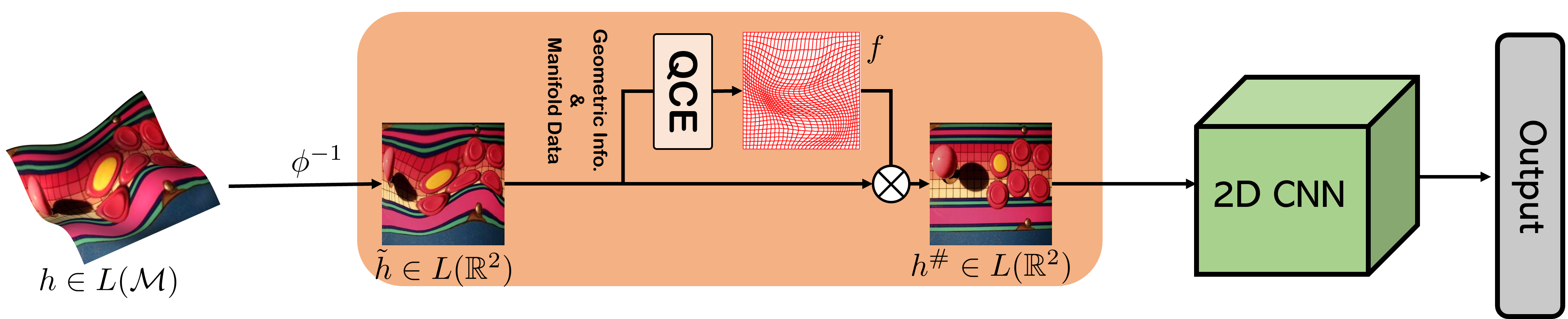}
    \caption{QCC implementation with a single learned quasi-conformal (QC) mapping $f$: The QCCNN is constructed by estimating a single QC mapping $f$, transforming the parameterized input manifold feature into a reparameterized domain, and then applying a standard 2D CNN.}
    \label{fig:qccimplementation}
\end{figure}


{

Thus, the training of the QCC Neural Network can be formulated as an optimization problem. Specifically, we define:
\begin{equation}
\begin{aligned}
    \tilde{h}\;\;\, &= h \circ \phi\\
     f\;\;\,  &= \mathcal{M}_\text{QCE}(\tilde{h};\boldsymbol{\theta}_\mathcal{M})\\
     h^{\#} &= \tilde{h} \circ f^{-1}\\
     y\;\;\,  &= \mathcal{N}(h^{\#};\boldsymbol{\theta}_\mathcal{N})
\end{aligned}    
\end{equation}
where $\phi$ is a precomputed mapping, and $\boldsymbol{\theta}_\mathcal{M}$ and $\boldsymbol{\theta}_\mathcal{N}$ denote the trainable parameters of the QCE and downstream networks, respectively.

To solve a specific task, we define an associated loss function $\mathcal{L}_\text{task}$. For instance, this may be the cross-entropy loss for classification or the mean squared error (MSE) loss for segmentation. Combined with the regularization terms from \eqref{eq:bclosses}, we arrive at the unified objective:
\begin{equation}
    \mathcal{L}(h;\boldsymbol{\theta}_\mathcal{M},\boldsymbol{\theta}_\mathcal{N}) = \lambda_\text{task} \mathcal{L}_\text{task}(h;\boldsymbol{\theta}_\mathcal{M},\boldsymbol{\theta}_\mathcal{N}) + \lambda_\text{bc} \mathcal{L}_\text{bc}(h;\boldsymbol{\theta}_\mathcal{M}) + \lambda_\text{lap} \mathcal{L}_\text{lap}(h;\boldsymbol{\theta}_\mathcal{M}),
\end{equation}
where $\lambda_\text{task}$, $\lambda_\text{bc}$, and $\lambda_\text{lap}$ are weighting coefficients that balance the contributions of each loss term.

Note that $h$ represents the input data and is not subject to optimization. Therefore, to achieve the desired task performance, we optimize over the network parameters, leading to the following minimization problem:
\begin{equation}
\min\limits_{\boldsymbol{\theta}_\mathcal{M},\boldsymbol{\theta}_\mathcal{N}}\mathcal{L}(h;\boldsymbol{\theta}_\mathcal{M},\boldsymbol{\theta}_\mathcal{N}),
\end{equation}

To solve this optimization problem, we apply gradient descent methods to update $\boldsymbol{\theta}_\mathcal{M}$ and $\boldsymbol{\theta}_\mathcal{N}$ using two separate optimizers:
\begin{equation}
\begin{aligned}
\boldsymbol{\theta}_\mathcal{M} &\leftarrow \boldsymbol{\theta}_\mathcal{M} - \tau_\mathcal{M} \nabla_{\boldsymbol{\theta}_\mathcal{M}} \mathcal{L}(\boldsymbol{\theta}_\mathcal{M} , \boldsymbol{\theta}_\mathcal{N}),\\
\boldsymbol{\theta}_\mathcal{N\,} &\leftarrow \boldsymbol{\theta}_\mathcal{N\,} - \tau_\mathcal{N\,} \nabla_{\boldsymbol{\theta}_\mathcal{N\,}} \mathcal{L}(\boldsymbol{\theta}_\mathcal{M} , \boldsymbol{\theta}_\mathcal{N}).
\end{aligned}
\label{eq:optimizer3D}
\end{equation}
where $\nabla{\boldsymbol{\theta}_\mathcal{M}}$ and $\nabla{\boldsymbol{\theta}_\mathcal{N}}$ denote the gradients with respect to the corresponding parameters. The step sizes $\tau_\mathcal{M}$ and $\tau_\mathcal{N}$ are adaptively adjusted using methods such as those described in \cite{amari1993backpropagation,KingBa15}. Alternating optimization between the two components is adopted; the specific configuration will be provided in the experimental section.
}
\section{Experiments}
This section provides the details of the implementation of our experiments and discusses their results. We evaluate the performance of our proposed model in comparison with other methods, such as PTCNet and GCNN. The experiments encompass a variety of tasks, including image classification on manifolds, craniofacial analysis, and segmentation of facial lesions. Additionally, to analyze the impact of different network architectures and hyperparameter configurations, we conduct ablation studies to test our model under various setups.

The experimental setting is described in detail below.

\noindent\textbf{Computational Resources and Parameters}
The training parameters are set as follows: \(\lambda_\text{bc} = 0.01\), \(\lambda_\text{lap} = 0.0001\), and the learning rate \(lr = 1.0 \times 10^{-5}\). The training is conducted on a CentOS 8.1 central cluster computing node equipped with two Intel Xeon Gold 5220R 24-core CPUs and two NVIDIA V100 Tensor Core GPUs.

\noindent\textbf{Conformal Convolutional Neural Network} The proposed model's ability to learn an adaptive convolution through the mapping $f$ in Equation \eqref{eq:formulationQCCNNsingle} is a key advantage. To assess the benefits of these learnable features, we also evaluate a baseline approach, the Conformal Convolutional Neural Network (CCNN), which relies solely on a non-trainable parametrized manifold convolution operation from conformal parameterization as introduced by Definition \ref{def:conformalconv}. By comparing results from CCNN and our model across various experiments, we reveal the significant advantages brought by the learned optimal parametrized manifold convolution.
\subsection{Image Classification on Riemann Surfaces}
\newcolumntype{A}{>{\centering\arraybackslash}p{0.18\columnwidth}}
\begin{table}
\centering
\begin{tabular}{c|AAAA}
\toprule
Method  & Single-Simple & Multi-Simple  & Single-Complex& Multi-Complex   \\\hline
CCNN    & 97.08         & 96.77         & 91.12         & 86.57        \\ 
GCNN    & 96.22         & 96.97         & 90.34         & 88.84         \\ 
PTCNet  & 97.96         & 97.32         & 92.86         & 86.06         \\ 
QCCNN   & \bf 99.11         & \bf 98.60         & \bf 97.45         & \bf 94.16         \\\hline
\bottomrule
\end{tabular}
\caption{Quantitative comparison of manifold image classification results.}
\label{tb:classification}
\end{table}

In this section, we evaluate the performance of our proposed method on image classification tasks involving MNIST digits mapped onto Riemann surfaces. The MNIST dataset consists of 60,000 handwritten digits spanning ten classes (0-9). To assess the effectiveness of QCCNN, we conduct experiments on Riemann surfaces exhibiting both simple and complex geometric structures.
Simple surfaces are relatively flat, making the classification tasks of MNIST data on them easier. In contrast, complex surfaces are irregular and exhibit significant geometric fluctuations, posing substantial challenges for the classification tasks. Examples of these surfaces with MNIST digits on them are shown in Figure~\ref{fig:mnist_surfaces}. 

For comparison, we benchmark our results against GCNN and PTCNet in Table \ref{tb:classification}. All networks are designed to include only one convolutional layer and one fully connected layer, consistent with the approach in \cite{schonsheck2022parallel}, to ensure a fair evaluation. Our network structure is illustrated in Figure \ref{fig:diagram1}. The network takes the geometric information of the surface, including Gaussian curvature and mean curvature, together with the texture information as the input to output the optimal quasi-conformal map associated with the best parameterized manifold convolution for the classification task.

In practice, we first compute the curvatures of the surface \cite{Dastan2025}, and then map the surface to a 2D domain \cite{meng2016conformal} as the input of both the estimator and the classifier. In the training stage, the batch size is fixed at 64 for all models. For QCCNN, we first train the classifier independently for 500 epochs, followed by alternating training of the classifier and the estimator for another 500 epochs, with the training alternating every 100 epochs. The loss function is given by

\begin{equation}\label{loss_total}
    \mathcal{L}_\text{total} = \lambda_\text{entropy} \mathcal{L}_\text{entropy} + \lambda_\text{bc} \mathcal{L}_\text{bc} + \lambda_\text{lap} \mathcal{L}_\text{lap},
\end{equation}
where $\mathcal{L}_\text{entropy}$ is the cross entropy loss and $\mathcal{L}_\text{bc}$ and $\mathcal{L}_\text{lap}$ are as defined in Equation (\ref{eq:bclosses}). $\lambda_\text{entropy}=1$, $\lambda_\text{bc}=0.01$ and $\lambda_\text{lap}=0.0001$ are positive weighting coefficients.

\begin{figure}
    \centering
    \includegraphics[width=\linewidth]{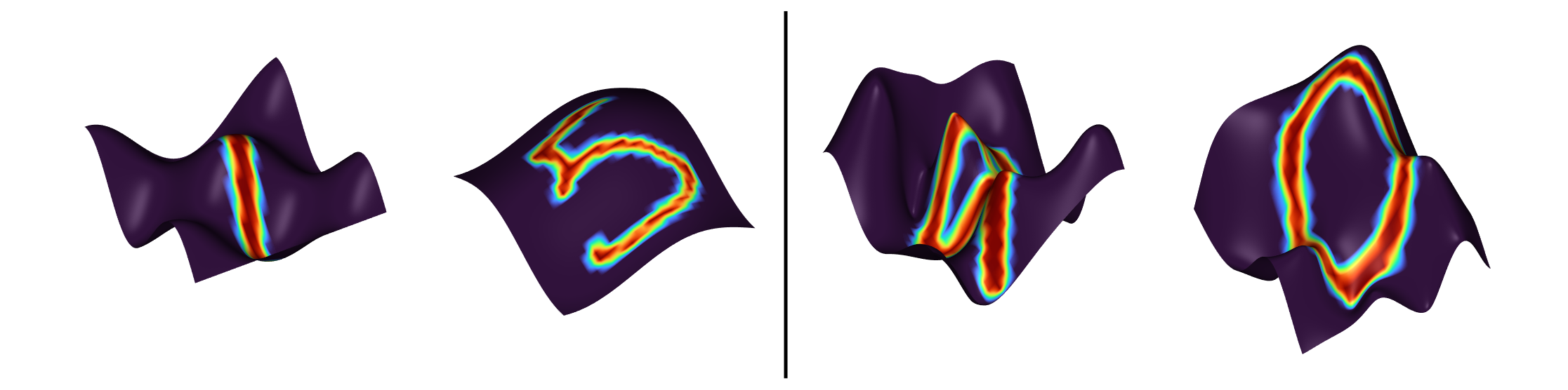}
    \caption{MNIST images printed on simple surfaces (left) and complex surfaces (right)}
    \label{fig:mnist_surfaces}
\end{figure}

\begin{figure}
    \centering
    \includegraphics[height=50mm]{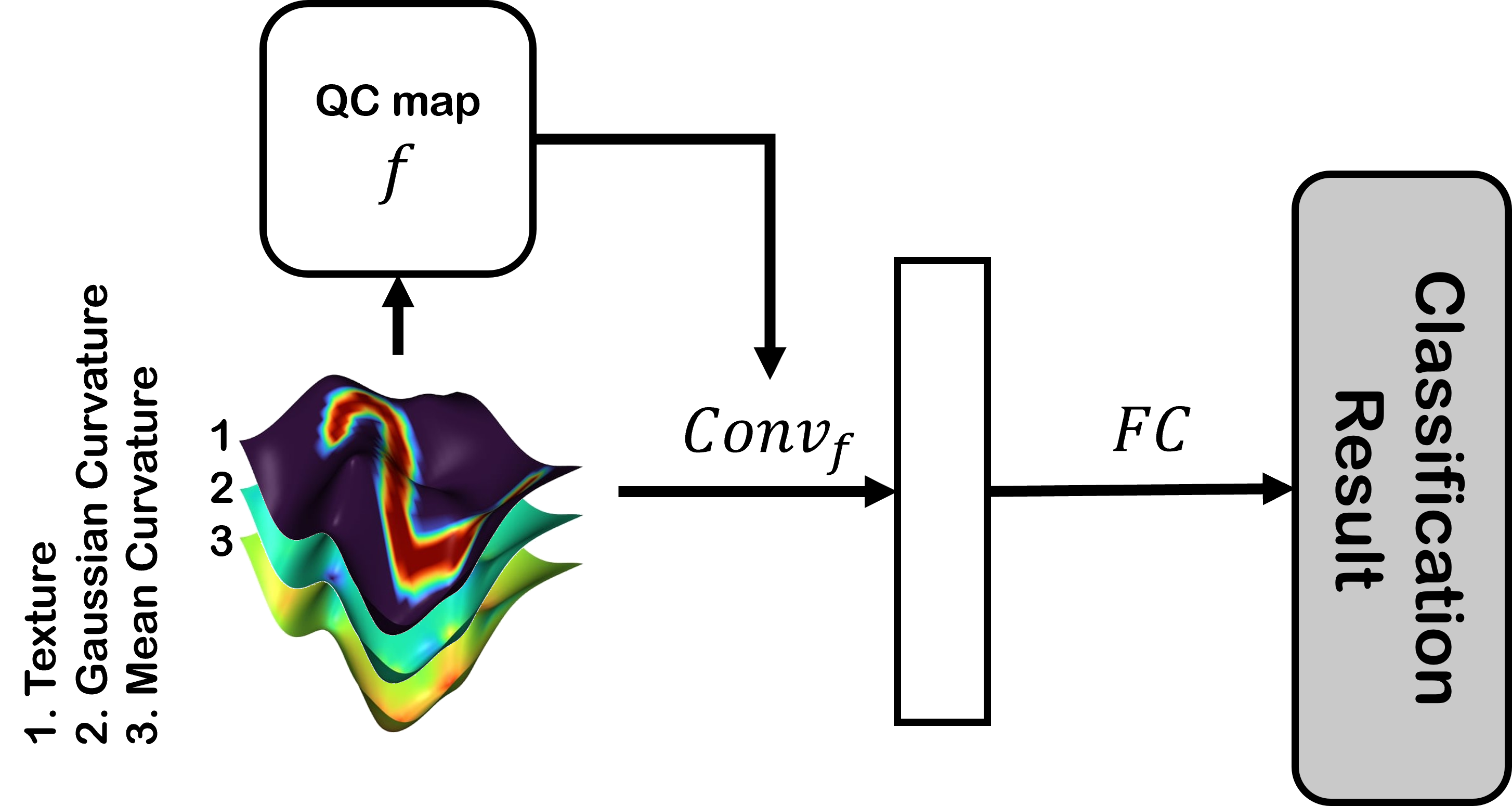}
    \caption{Illustration diagram of QCCNN for classification of MNIST on Riemann surfaces.}
    \label{fig:diagram1}
\end{figure}

\subsubsection{Classification on Simple Manifold}
For simple surfaces, we generate them using basic trigonometric functions to produce $z$-coordinate in a range of $[-0.2,0.2]$ given $x \in [0,1]$, $y \in [0,1]$, which is similar to the approach in PTCNet experiments. 

Two distinct setups are considered for simple manifolds: single-simple and multi-simple. In the single-simple setup, we use a single surface for both training and testing. In contrast, the multi-simple setup involves training on four surfaces and testing on a different, unseen surface. These setups are inspired by \cite{schonsheck2022parallel}, and the results for GCNN and PTCNet are taken from the same reference. As shown in Table \ref{tb:classification}, our model achieves the highest accuracy, outperforming both PTCNet and GCNN.

\subsubsection{Classification on Complex Manifold}
For complex surfaces, we first randomly sample 24 points within $[-0.2, 1.2]^2 \subset \mathbb{R}^2$, which contains the unit square $[0,1]^2$. Then, each point is assigned with a random height value in $[-0.4,0.4]$ as the $z$-coordinate. The surface $S$ is then constructed through biharmonic spline interpolation among these points and the subset $S'=\{(x,y,z) \in S : (x,y) \in [0,1]^2\} \subset S$ is the desired surface for the learning task. Such cropping ensures that any over-fluctuating boundary can be removed, but $S$ may still contain some points with $z$-coordinate outside $[-0.4,0.4]$. Due to the significantly larger range and variance of $z$-coordinates compared to simple surfaces, these complex surfaces are much more irregular and challenging, posing a significant test for surface learning models.

As with the experiments for simple manifolds, two setups are considered: single-complex and multi-complex. In the single-complex setup, one surface is used for both training and testing. In the multi-complex setup, 50 surfaces are used for training, and another set of 50 unseen surfaces is used for testing.

We follow the same training protocol for QCCNN as used in the simple manifold experiments. As shown in Table \ref{tb:classification}, our model achieves the highest accuracy, surpassing both GCNN and PTCNet.

\subsection{Craniofacial Analysis}
\newcolumntype{C}{>{\centering\arraybackslash}p{0.12\columnwidth}}
\begin{table}[h!]
\centering
\begin{tabular}{c|CCCCC}
\toprule
Method  & Accuracy  & Sensitivity & Precision    & N.P.V.   & Specificity   \\\hline
CCNN    & 91.33     & 90.00     & 92.47     & 90.26         & 92.67 \\
GCNN    & 89.33     & 91.33     & 87.82     & 90.97         & 87.33\\ 
PTCNet   & 91.00     & 92.00     & 90.20     & 91.84         & 90.00\\ 
QCCNN   & \bf 97.00     & \bf 96.67     & \bf 97.32     & \bf 96.69         & \bf 97.33
\\\hline
\bottomrule
\end{tabular}
\caption{Quantitative comparison of craniofacial analysis results.}
\label{tb:Craniofacial}
\end{table}

\begin{figure}
    \centering
    \includegraphics[height=50mm]{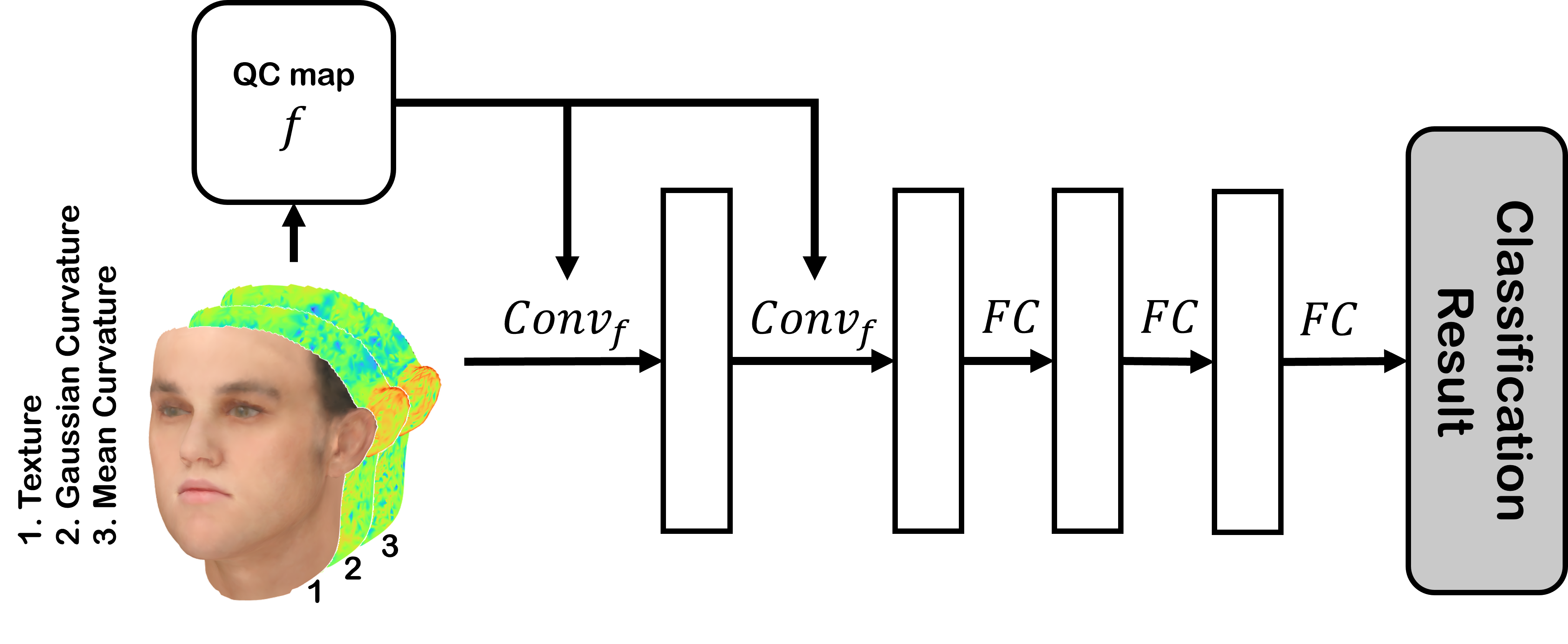}
    \caption{Illustration diagram of QCCNN for craniofacial analysis.}
    \label{fig:diagram2}
\end{figure}

\begin{figure}
    \centering
    \includegraphics[width=\linewidth]{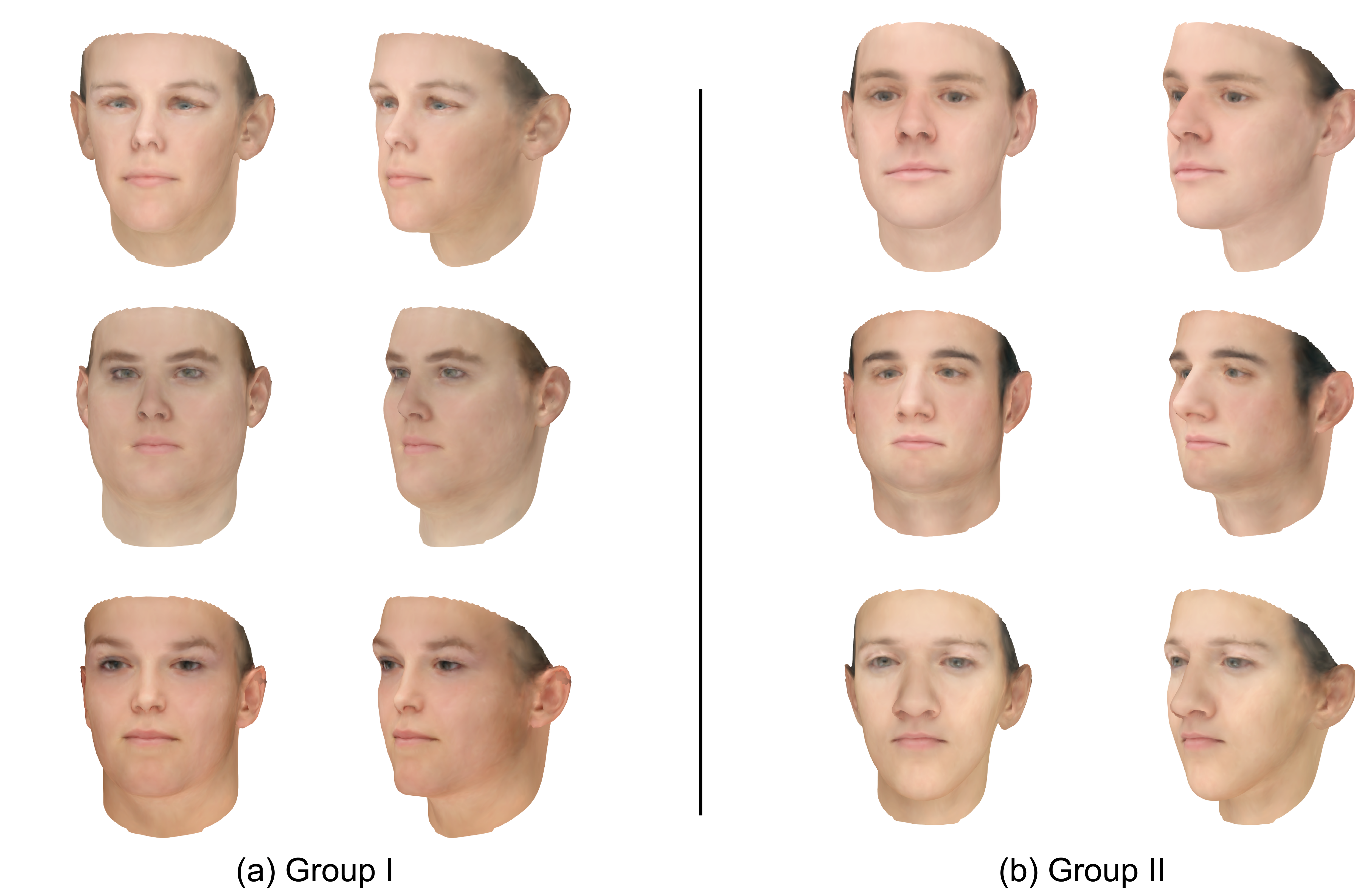}
    \caption{Craniofacial analysis examples for faces with different nasal structures.}
    \label{fig:faceosa}
\end{figure}
Craniofacial analysis aims to study the structure and relationships of the bones and tissues of the skull and face, with important applications in orthodontics, forensic science, reconstructive surgery, and medical image analysis. For instance, craniofacial analysis has been used to study nasal structure to evaluate facial asymmetries, plan surgical interventions, and serve as an excellent clinical parameter for various orthodontic treatments \cite{nehra2009nasal,burstone1958integumental} and disease diagnoses \cite{matthews2022static}.

A dataset of 2D human facial surfaces is used, consisting of two groups with 500 subjects each. Each group exhibits distinct nasal structural patterns that may include variations in size, symmetry, and other characteristics. Among these samples, 70\% are used for training, while the remaining 30\% are reserved for testing. Representative examples from the two groups, showing different nasal structural patterns, are presented in Figure \ref{fig:faceosa}.


During the parameterization process, each 3D facial surface is mapped to a 2D image of size $128 \times 128$. For the classification task, we use a neural network architecture comprising two convolutional layers and three fully connected layers, as shown in Figure \ref{fig:diagram2}. Gaussian curvature and mean curvature, which together provide a comprehensive set of geometric descriptors of the Riemann surface, along with texture information, are input into the network. The network outputs the quasiconformal mapping associated with the optimal QCC and the classification result. A batch size of 50 is used for training. For QCCNN, the classifier and estimator are trained alternately for 100 epochs each, resulting in a total of 1000 epochs. As in the previous experiment, Equation \ref{loss_total} is used as the loss function for this classification task.

The results of this study, as summarized in Table \ref{tb:Craniofacial}, demonstrate that our QCCNN outperforms Parallel Transport Convolutional Networks (PTCNet), Geodesic Convolutional Neural Networks (GCNN), and Conformal Convolutional Neural Networks (CCNN) in classifying nasal structural differences on Riemann surfaces. This improvement can be attributed to the flexibility of the QCCNN framework, which dynamically adapts convolution operations based on both texture and geometric information through the use of quasi-conformal mappings. Unlike traditional networks and CCNN, which rely on predefined convolution operations, QCCNN enables the convolution process to account for subtle variations in different surface geometries. This adaptability enhances the ability of the model to extract discriminative features, resulting in superior performance.

\subsection{Facial Lesion Segmentation}
\label{sec:self_ablation}

\newcolumntype{B}{>{\centering\arraybackslash}p{0.13\columnwidth}}
\begin{table}
\centering
\begin{tabular}{c|BBBBB}
\toprule
Method  & MSE$\times 10^{-2}$   & Dice  & IoU   & Sensitivity   & HD95  \\\hline
CCNN     & 2.469                 & 83.06 & 75.98 & 85.74         & 18.12\\ 
GCNN    & 3.124                 & 81.18 & 72.52 & 83.30         & 12.05\\ 
QCCNN   & \bf 0.878                 & \bf 88.62 & \bf 83.15 & \bf 91.52         & \bf 10.45\\\hline
\bottomrule
\end{tabular}
\caption{Quantitative comparison of facial lesion segmentation.}
\label{tb:leision}
\end{table}

\begin{figure}
    \centering
    \includegraphics[height=50mm]{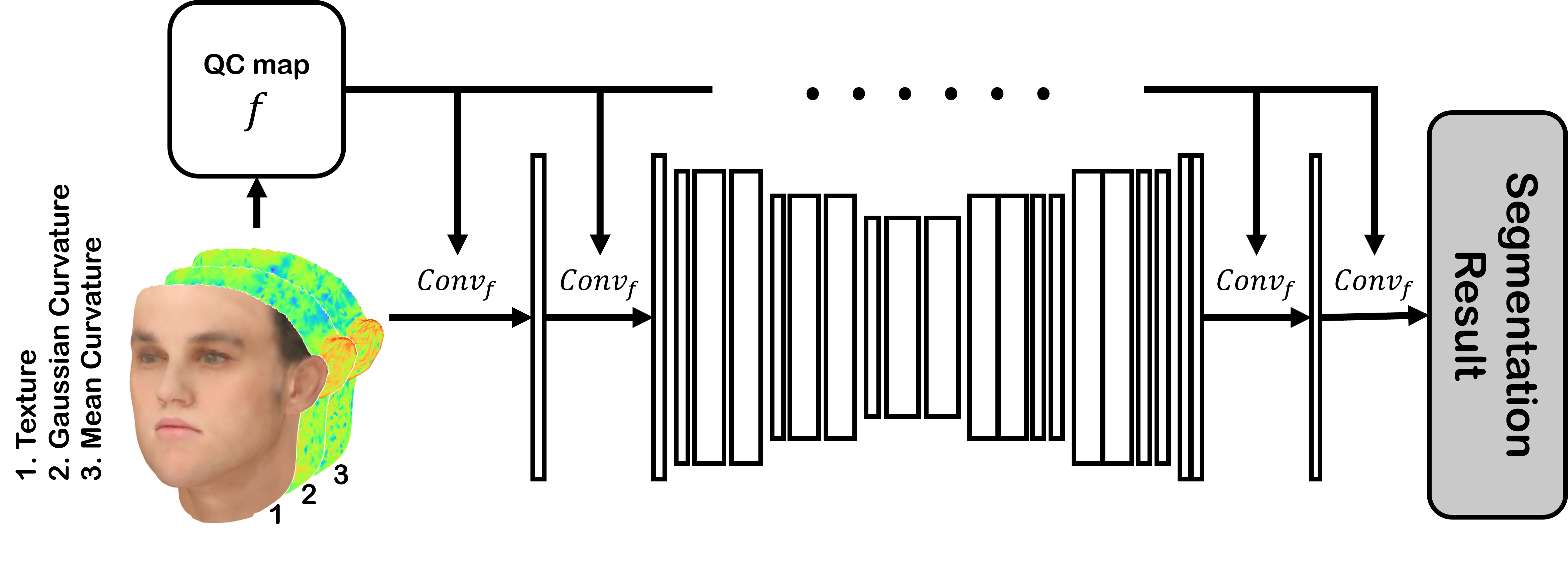}
    \caption{Illustration diagram of QCCNN for face lesion experiment.}
    \label{fig:diagram3}
\end{figure}

\begin{figure}
    \centering
    \includegraphics[width=\linewidth]{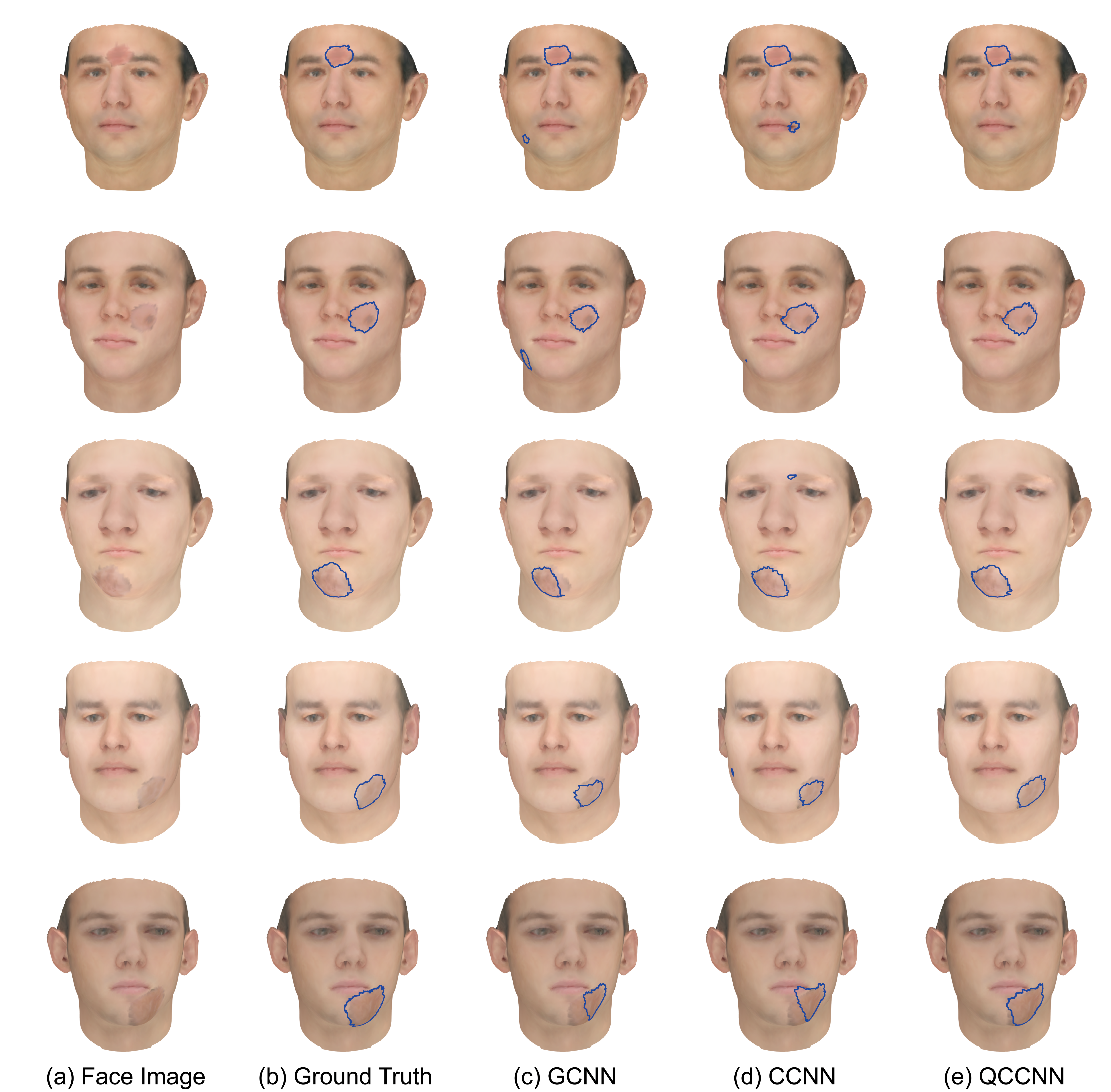}
    \caption{Qualitative comparison of facial lesion segmentation results.}
    \label{fig:facelesion}
\end{figure}

Segmentation on Riemann surfaces has a broad range of applications, including skin lesion segmentation, facial feature extraction, and medical image analysis. In this experiment, we evaluate our proposed framework for the segmentation task on Riemann surfaces. Specifically, we apply our framework to segment skin lesions on human faces. The experiments are conducted on a dataset of 4000 human faces with skin lesions located in various regions. Each face is accompanied by a ground truth segmentation mask. Of the 4000 labeled samples, 3200 are used for training, while the remaining 800 are reserved for testing the supervised surface segmentation deep neural network. 

To evaluate the necessity and advantages of incorporating quasi-conformal mappings within our QCC framework, we compare its performance against the Geodesic Convolutional Neural Network (GCNN) and the Conformal Convolutional Neural Network (CCNN). The results highlight the importance of learnable convolutions enabled by QCCNN and demonstrate its significant benefits over alternative methods.


During the parameterization stage, the surfaces are conformally mapped into images of size $128 \times 128$. The training process is conducted with a batch size of 50. The architecture of the QCCNN for the segmentation task on Riemann surfaces is illustrated in Figure \ref{fig:diagram3}. The network incorporates QC estimators to compute the quasiconformal mappings associated with the optimal QC convolution for each input surface. A UNet architecture with three downsampling layers is employed for segmentation. The inputs to the network include texture information, Gaussian curvature, and mean curvature. The loss function is given by

\begin{equation}\label{loss_seg}
    \mathcal{L}_\text{seg} = \lambda_\text{mse} \mathcal{L}_\text{mse} + \lambda_\text{bc} \mathcal{L}_\text{bc} + \lambda_\text{lap} \mathcal{L}_\text{lap},
\end{equation}
where $\mathcal{L}_\text{mse}$ is the mean square loss and $\mathcal{L}_\text{bc}$ and $\mathcal{L}_\text{lap}$ are as defined in Equation \ref{eq:bclosses}. $\lambda_\text{mse}=100$, $\lambda_\text{bc}=0.01$ and $\lambda_\text{lap}=0.0001$ are positive weighting coefficients.

Qualitative results are shown in Figure \ref{fig:facelesion}. In the first two rows, false-positive regions appear in both CCNN and GCNN results due to closely matching texture patterns and intrinsic geometric features in non-lesion areas, leading to segmentation inaccuracies. In the third to fifth rows, GCNN fails to segment the entire lesion region. Similar failures are observed in the results from CCNN, emphasizing the advancements introduced by the proposed QCCNN method, particularly its use of adaptive and learnable convolutions.

Quantitative results are provided in Table \ref{tb:leision}. The proposed method outperforms the competing approaches in both standard metrics, such as the Dice coefficient, and geometric metrics, such as HD95. These results demonstrate the robustness of the QCCNN in accurately segmenting lesion regions by effectively leveraging adaptive, data-aware convolution kernels and kernel shapes.

\subsection{Self Ablation}

In this section, we perform self-ablation studies to analyze the impact of various factors in the proposed Quasi-conformal Convolutional Network (QCCNN) model on overall performance.

\newcolumntype{D}{>{\centering\arraybackslash}p{0.12\columnwidth}}
\begin{table}
\centering
\begin{tabular}{c|DDDDD}
\toprule
Down Ratio& Accuracy  & Trainable Params. & For./Back. Pass Size  & Params. Size  & Total Size\\\hline
$\times 2^{-1}$ & 93.83     & 23,634            & 0.93 MB               & 0.09 MB       & 1.03 MB\\ 
$\times 2^{-2}$ & 97.45     & 99,858            & 1.02 MB               & 0.38 MB       & 1.41 MB\\ 
$\times 2^{-3}$ & \bf 97.55 & 404,370           & 1.02 MB               & 1.54 MB       & 2.57 MB\\ 
$\times 2^{-4}$ & 97.43     & 1,621,650         & 1.02 MB               & 6.19 MB       & 7.22 MB\\\hline
\bottomrule
\end{tabular}
\caption{Self-ablation study on the impact of varying the number of downsampling layers.}
\label{tb:convdepth}
\end{table}

\begin{table}
\small
\centering
\begin{tabular}{c|DDD}
\toprule
$\lambda_\mu$   & Accuracy  & $\|\mu\|_2$   & $\|\mu\|_\infty < 1$  \\\hline
$ 0 $           & 93.69     & 2.743         & \ding{55}                \\
$10^{-4}$       & 95.73     & 0.821         & \ding{55}                \\
$10^{-3}$       & 97.11     & 0.132         & \ding{51}                \\
$10^{-2}$       & 97.45     & 0.028         & \ding{51}                \\
$10^{-1}$       & 96.23     & 0.007         & \ding{51}                \\
$10^{ 0}$       & 92.04     & 0.001         & \ding{51}                \\\hline
\bottomrule
\end{tabular}
\caption{Self-ablation study on the impact of varying weighting for Beltrami regularization.}
\label{tb:self}
\end{table}

\begin{table}
\small
\centering
\begin{tabular}{c|D}
\toprule
Parameterization& Accuracy\\\hline
ARAP-1            & 97.33   \\
ARAP-2            & 96.33   \\
ARAP-3            & 97.00   \\\hline
Conformal-original& 97.00   \\
Conformal-1       & 97.00   \\
Conformal-2       & 96.67   \\\hline
\bottomrule
\end{tabular}
\caption{{Self-ablation study on the impact of parameterization methods.}}
\label{tb:parameterization}
\end{table}

\newcolumntype{E}{>{\centering\arraybackslash}p{0.18\columnwidth}}
\begin{table}
\small
\centering
\begin{tabular}{c|EEE}
\toprule
Module Name & Accuracy  & Trainable Params. & Params. Size\\\hline
Estimator   & -         & 439,218           & 1.68 MB\\ 
CCNN (Classifier)  & 91.33     & 5,804,185         & 22.14 MB\\ 
CCNN-L      & 92.00     & 6,243,547         & 23.82 MB\\ 
QCCNN       & 97.00     & 6,243,403         & 23.82 MB\\\hline
\bottomrule
\end{tabular}
\caption{{Study on the impact of quasi-conformal convolution framework and purely conformal parametrized convolutional neural network.}}
\label{tb:QCCndCC}
\end{table}

\noindent\textbf{Test on Downsample Levels} We investigate the effect of the number of downsampling layers on the performance of the deformation estimation task. For this study, we use MNIST images on a single-complex test. The batch size is set to 64, and the model is trained as the manifold image classification task. The input consists of three-channel manifolds, including Gaussian curvature, mean curvature, and the image pixel channel. We evaluate the performance using different numbers of downsampling layers: 1, 2, 3, and 4 downsampling layers.

Our results indicate that using 2 downsampling layers is sufficient to achieve good performance, as reflected by the prediction accuracy in Table \ref{tb:convdepth}. Increasing the number of downsampling layers beyond 2 does not lead to significant improvements in accuracy, but it does increase computational and memory costs. These additional costs are demonstrated by several factors, including the number of trainable parameters, the forward/backward passes (storage used for all output features from each layer in the model, which indicates the amount of memory required for each pass through the network), the storage size required to store the parameters, and the overall total size of the model.

\medskip

\noindent\textbf{Test on Weighting for BC Loss} We examine the significance of the Beltrami coefficient regularization term proposed in our model. For this study, we use MNIST images on a single-complex test. Specifically, we experiment with the following values for the regularization parameter: \(\lambda_\text{bc} = 0, 10^{-4}, 10^{-3}, 10^{-2}, 10^{-1}, 1\). Table \ref{tb:self} reports the quantitative measurements of classification accuracy. The results show that the highest Dice score is achieved when the weighting for the BC loss is \(10^{-2}\), which also ensures the bijectivity of the mapping, as reflected by the infinity norm of the Beltrami coefficient. A weighting of \(10^{-3}\) does not significantly degrade the results. However, when the weighting is too small, such as \(10^{-4}\), or when the BC loss is omitted entirely, the results degrade noticeably due to the loss of bijectivity, as indicated by the infinity norm (the maximum) of the associated Beltrami coefficient. On the other hand, when the weighting exceeds \(10^{-1}\), the deformation prediction becomes over-constrained, preventing efficient training of the deformation estimator and leading to suboptimal results, even though the mapping remains bijective.

\medskip

{
\noindent\textbf{Test on Different Parametrization} To evaluate the robustness of our method with respect to the choice of initial parameterization, we conducted experiments on the craniofacial analysis task using human facial surface data. Specifically, we compared two commonly used parameterization methods: Conformal Parameterization~\cite{choi2015fast} and As-Rigid-As-Possible (ARAP) Parameterization~\cite{liu2008local}, each applied with three groups of different boundary correspondences as boundary conditions. As shown in Table~\ref{tb:parameterization}, the results across these different parameterizations are highly consistent, indicating that our method is relatively insensitive to variations in the initial conformal map.
}

\medskip

{
\noindent\textbf{Validation on Benefit of QCC} In this evaluation, we compared our QCCNN model with a model containing only the classifier baseline, which is referred to as Conformal Convolutional Neural Network (CCNN), and a larger one, CCNN-L, which has a similar number of parameters as QCCNN. Specifically, CCNN-L feeds the input to a U-Net (with the same architecture as the estimator used in QCCNN), then concatenates the U-Net output with the original input before passing it to the classifier. With such modification, CCNN-L has roughly the same parameter counts with QCCNN, and the U-Net is trained to output feature maps instead of deformation maps. As shown in Table~\ref{tb:QCCndCC}, CCNN-L achieves only marginal improvement over the original CCNN, while QCCNN consistently outperforms both significantly. This highlights the unique advantage of the learnable convolution module in QCCNN. By utilizing only a small estimator module, the model effectively adapts to complex surface data in a spatially deformable manner and gives superior results compared to similar-sized modules that output feature maps.
}
\section{Conclusion}

In this work, we introduce a novel framework for defining convolution on Riemann surfaces based on quasi-conformal theory, termed Quasi-conformal Convolution (QCC). This approach generalizes the conventional definition of convolution, enabling the convolution of functions defined on {simply-connected open} surfaces with a given kernel. By leveraging different quasi-conformal mappings, QCC dynamically adjusts convolution operations to suit specific tasks and data.

QCC facilitates adaptive and effective convolutional operations on {simply-connected open} surfaces through a dedicated module that generates data-responsive quasi-conformal mappings. This allows the definition of convolution to be dynamic and tailored to the input data. Building on this foundation, we develop the Quasi-conformal Convolutional Neural Network (QCCNN), which is designed to handle tasks involving geometric data on {simply-connected open} surfaces. By adapting the convolution process to the underlying Riemann surface structure and training data, QCCNN overcomes the challenges of defining optimal convolution operations for complex geometries, enabling the convolution definition to be learned and refined during training.

We also establish a comprehensive theoretical foundation for convolution on Riemann surfaces and the proposed QCC. This not only enhances our understanding of QCC but also provides valuable insights for constructing models tailored to specific tasks.

Our experiments demonstrate the effectiveness of QCCNN across a range of applications, including Riemann surface image classification, craniofacial analysis using 3D facial data, and facial lesion segmentation. In each case, QCC outperforms existing methods in terms of accuracy and reliability. The proposed QCC framework offers significant potential for advancing deep learning applications in fields involving complex, non-Euclidean geometric structures. Our future work is planned to extend quasi-conformal convolution to more general cases, such as high-genus surfaces.
\section*{Acknowledgments}
This work was supported by HKRGC GRF (Project ID: 14306721), and Hong Kong Centre for Cerebro- Cardiovascular Health Engineering (COCHE).

\bibliographystyle{siamplain}
\bibliography{references}

\end{document}